\newcommand{\RR} {\mathbb R}
\newcommand{\NN} {\mathbb N}
\newcommand{\calH} {\mathcal H}
\newcommand {\lb} { \llbracket}      %
\newcommand {\rb} { \rrbracket}      %
\newcommand {\Max} {\alpha}      %
\DeclareMathOperator{\Wa}{\mathbf{S}}
\newcommand {\Walsh}[1] {\Wa_{2^{#1}}}      %
\DeclareMathOperator{\diag}{diag}
\DeclareMathOperator{\argmin}{argmin}
\newcommand {\transpose} {\prime}
\newcommand {\HRNN} {HadamRNN}
\newcommand {\BHRNN} {Block-HadamRNN}
\DeclareMathOperator{\FP}{FP}
\DeclareMathOperator{\fp}{fpp}
\theoremstyle{plain}
\newtheorem{theorem}{Theorem}[section]
\newtheorem{prop}[theorem]{Proposition}
\theoremstyle{definition}
\newtheorem{definition}[theorem]{Definition}
\theoremstyle{remark}
\newcommand{\red}[1]{{\color{red}{#1}}\color{black}}
\newcommand{\blue}[1]{{\color{blue}{#1}}\color{black}}
\newcommand{\QQk}[1] {\mathcal Q_{#1}}
\newcommand{\QQ} {\mathcal Q}
\title{HadamRNN:  Binary and sparse ternary Orthogonal RNNs}
\author{Armand Foucault\\
  Institut de Mathématiques de Toulouse,\\
  UMR5219. Université de Toulouse, CNRS. UPS IMT,\\
  F-31062 Toulouse Cedex 9, France \\
  \texttt{armand.foucault@math.univ-toulouse.fr} \\
\And
Franck Mamalet\\
  Institut de Recherche Technologique Saint Exupéry\\
  Toulouse, France\\
 \texttt{franck.mamalet@irt-saintexupery.com} \\
\AND 
François Malgouyres\\
 Institut de Mathématiques de Toulouse,\\
  UMR5219. Université de Toulouse, CNRS. UPS IMT,\\
  F-31062 Toulouse Cedex 9, France \\
  \texttt{francois.malgouyres@math.univ-toulouse.fr} \\
  }
\date{\today}
\begin{document}

\maketitle

\begin{abstract}

Binary and sparse ternary weights in neural networks enable faster computations and lighter representations, 
facilitating their use on edge devices with limited computational power. Meanwhile, vanilla RNNs are highly sensitive to changes in their recurrent weights, making the binarization and ternarization of these weights inherently challenging. 
To date, no method has successfully achieved binarization or ternarization of vanilla  RNN weights. We present a new approach leveraging the properties of Hadamard matrices to parameterize a subset of binary and sparse ternary orthogonal matrices. This method enables the training of orthogonal RNNs (ORNNs) with binary and sparse ternary recurrent weights, effectively creating a specific class of binary and sparse ternary vanilla RNNs. The resulting ORNNs, named \HRNN{} and \BHRNN, are evaluated on various benchmarks, including the copy task, permuted and sequential MNIST tasks, the IMDB dataset, two GLUE benchmarks, and two IoT benchmarks. Despite binarization or sparse ternarization, these RNNs maintain performance levels comparable to state-of-the-art full-precision models, highlighting the effectiveness of our approach. Notably, our approach is the first solution with binary recurrent weights capable of tackling the copy task over 1000 timesteps.
\end{abstract}

\section{Introduction}
\label{sec:intro}

A Recurrent Neural Network (RNN) is a neural network architecture relying on a recurrent computation mechanism at its core. These networks are well-suited for the processing of time series, thanks to their ability to model temporal dependencies within data sequences. Traditional Recurrent architectures such as vanilla RNNs, LSTM \citep{hochreiter1997long}, GRU \citep{cho2014learning} or Unitary/Orthogonal RNN \citep{arjovsky2016unitary,helfrich2018orthogonal} have achieved remarkable performance across various sequential tasks including neural machine translation \citep{devlin2014fast, Sutskever2014SequenceTS} and speech recognition \citep{amodei2016deep, chan2016listen}.

Modern RNN architectures typically rely on millions, or even billions, of parameters to perform optimally. This necessitates substantial storage space and costly matrix-vector products at inference-time, that may result in computational delays. 
These features can be prohibitive when applications must operate in real-time or on edge devices with limited computational resources.

A compelling strategy to alleviate this problem is to replace the full-precision weights within the network with weights having a low-bit representation. This strategy, known as neural network quantization \citep{10.5555/2969442.2969588, lin2015neural, Courbariaux2016BinaryNetTD, 10.5555/3122009.3242044, zhou2016dorefa}, has been extensively studied over the recent years. 
For optimal computational efficiency and memory savings, weights should be binarized,
that is, represented over only $1$ bit. For the case of recurrent networks, it was shown \citep{Ott2016RecurrentNN, he2016effective, hou2017loss, Alom2018EffectiveQA, ardakani2018learning} that LSTMs and GRUs with binarized weights could achieve near state-of-the-art results on natural language datasets such as Penn TreeBank \citep{Taylor2003ThePT}, Leo Tolstoy's \textit{War and Peace}, \citep{hou2017loss} or IMDB \citep{maas2011learning}.

However, vanilla RNNs, LSTMs and GRUs usually struggle at learning tasks involving very long-term dependencies, notably due to the exploding gradient problem \citep{pascanu2013difficulty}. In \citet{arjovsky2016unitary, wisdom2016fullcapacity, lezcano2019cheap} for instance, it can be observed that LSTMs fail to solve the copy task with long sequences. In recent years, alternative models including transformers \citep{Vaswani2017AttentionIA}, ODE-inspired RNNs \citep{erichson2021lipschitz, rusch2021unicornn} including SSMs \citep{gu2022efficiently} were designed to accurately model long-term dependencies. Among these, recent studies have tackled the challenge of quantizing transformers, but have only achieved model sizes in the tens of megabytes (see \cref{sec:related} and \cref{glue_b_sec}). The only known attempt to quantize SSMs reports a significant loss of accuracy when reducing precision below 8-bit \citep{abreu2024q}. Hence the need for efficient,  lightweight,  binary recurrent architectures capable of handling longer dependencies than LSTMs and GRUs.

\noindent{\bf Contribution: } In this paper, we binarize the recurrent weights of Orthogonal Recurrent Neural Networks (ORNNs), which is a special case of vanilla RNNs. The binary orthogonal matrices are constructed using Hadamard matrix theory. We call these networks Hadamard RNNs (\HRNN). To the best of our knowledge, this is the first successful attempt to binarize the weights of vanilla and orthogonal RNNs. To reduce the complexity further, we also build sparse ternary ORNNs called Block-Hadamard RNNs (\BHRNN). The resulting \HRNN s and \BHRNN s are fully-quantized, lightweight, highly efficient, and model long-term dependencies more accurately than LSTMs and GRUs. This claim is supported by results on a variety of benchmarks including the copy task for $1000$ timesteps, permuted MNIST, pixel-by-pixel MNIST, the IMDB dataset, two GLUE benchmarks, and two IoT benchmarks. Despite the drastic reduction in computational complexity and memory footprint, the performance degradation remains moderate compared to full-precision ORNNs. Ablation studies show the effectiveness of the choices made in the article.

\noindent{\bf Organization of the paper} In \cref{sec:related},  we review previous work on binarizing and ternarizing the weights of neural networks for sequential data modeling. The bibliography is supplemented by \cref{ornns-biblio-sec}, which focuses on ORNNs. In \cref{sec:model}, we describe the method for parameterizing a subset of binary and sparse ternary orthogonal matrices. We also outline all the components of \HRNN s and \BHRNN s. Experiments are detailed in \cref{sec:expe}, and conclusions are provided in \cref{sec:ccl}. 

Additional bibliographic references are in \cref{ornns-biblio-sec}, and proofs are provided in \cref{proof-sec}. \cref{appencice_STE_ex} illustrates the model description. Details for reproducing the experiments, along with additional experimental results, are in \cref{app:expe_details}, \cref{app_comp_exp}, and \cref{other-benchmarks}. Theoretical complements and an ablation study on the (linear) ORNN architecture are in \cref{memorization-appendix}, while those on the proposed parameterization of orthogonal matrices are in \cref{ligne-sec}. The fixed-point arithmetic implementation is detailed in \cref{sec:ptq}.


 The code implementing the experiments is available at \href{https://github.com/deel-ai-papers/hadamRNN}{hadamRNN} 

\section{Related works}
\label{sec:related}

This section provides an overview of prior research efforts aimed at quantizing the weights of neural networks designed for handling sequential data, encompassing both recurrent and non-recurrent architectures. We first highlight the works that report performance with binary and ternary recurrent weights. Further bibliographical details on ORNNs can be found in \cref{ornns-biblio-sec}.

In the seminal paper \citet{Ott2016RecurrentNN}, the authors apply binarization and ternarization methods on vanilla RNN, LSTM and GRU architectures. Remarkably, they acknowledge the difficulty of training binary RNNs; they write at the beginning of Section 3.1.1: \lq We have observed among all the three RNN architectures that BinaryConnect on the recurrent weights never worked.\rq In \citet{hou2017loss}, the authors apply a loss-aware binarization scheme to an LSTM and achieve better performances than the conventional BinaryConnect algorithm \citep{10.5555/2969442.2969588} on a language modeling task. Using a learning process incorporating stochasticity and batch-normalization, \citet{ardakani2018learning} show that an LSTM and a GRU with binary weights can achieve results comparable to their full-precision counterparts on language modeling tasks and the sequential MNIST task. Finally, \citet{he2016effective, liu2018binarized,wang2018hitnet, Alom2018EffectiveQA} take a step further toward a fully quantized recurrent network. \citet{he2016effective} proposes to quantize the activations of a GRU and an LSTM in addition to binary weights. \citet{liu2018binarized} suggests to binarize the word embeddings as inputs for an LSTM with binary weights. \citet{Alom2018EffectiveQA} propose another quantization scheme and implement quantized convolutional LSTM which are tested on the moving MNIST dataset. 

Among the articles cited above, \citet{Ott2016RecurrentNN,hou2017loss,ardakani2018learning,he2016effective} consider character-level language models. The works by \citet{ardakani2018learning,wang2018hitnet,liu2018binarized} report performance results for word-level language models. \citet{he2016effective,Alom2018EffectiveQA} address sentiment analysis on the IMDB dataset \citep{maas2011learning}. \citet{ardakani2018learning} is the only article reporting results for a long-term dependency problem, that LSTMs are known to solve efficiently: the sequential MNIST problem. \cite{kusupati2018fastgrnn} is the only work to solve IoT tasks. None of the articles attempts to solve the permuted MNIST problem or the copy task for 1000 timesteps, both of which are known to be better addressed by ORNNs.

Quantization methods for LSTMs and GRUs using larger bit-widths have been described in  \citet{10.5555/3122009.3242044,kusupati2018fastgrnn,nia2023training,xualternating,zhou2017balanced}.

To the best of our knowledge, the only article describing a method for quantizing ORNNs is \citet{foucault2024quantized}. In this article, the authors succeed in learning tasks involving long-term dependency with a $4$-bit ORNN. 

We classify existing efforts to quantize Transformers such as BERT \citep{devlin2018bert} based on the level of quantization applied. Some approaches focus on quantizing the weights to 8 bits \citep{zafrir2019q8bert,sun-etal-2020-mobilebert,stock2021training} or even 4 bits \citep{shen2020q,zadeh2020gobo}. Others explore more aggressive quantization, employing ternary weights \citep{zhang2020ternarybert} or binary weights \citep{bai2021binarybert}. Fully binarized versions of BERT, including binarized activations, are presented in \citep{qin2022bibert,liu2022bit}.

As shown in \cref{glue_b_sec}, \cref{tab:glue_benchmark}, all these networks require at least tens of megabytes of storage. This contrasts sharply with the models described in this article, which require much smaller sizes.

Finally, \citet{yao2022zeroquant,frantar2023optq,xiao2023smoothquant,liu2023llm} extend quantization techniques to large language models with billions of parameters.



\section{Hadamard and Block-Hadamard RNNs}
\label{sec:model}

We describe the details of the considered ORNNs in \cref{ornns}. 
A brief review of the key properties of Hadamard matrices is provided in \cref{hadam-sec}. We explain, in \cref{hrnn-sec}, how Hadamard matrices are used to build ORNNs with binary recurrent weights that we call \HRNN. We extend the construction
to 
sparse ternary recurrent weight matrices, referred to as \BHRNN, in \cref{bhrnn-sec}. 
We describe how input and output weight matrices are quantized in \cref{quant_U_V-sec} and compare the complexities of the proposed models in \cref{Complexity-sec}.

\subsection{ORNNs}\label{ornns}
Orthogonal recurrent networks are a class of recurrent networks that rely on the same recurrent operation as the one of a vanilla recurrent network, but add an orthogonality constraint on the recurrent weight matrix. Given a sequence of inputs $x_1, \ldots, x_T \in \RR^{d_{in}}$, the model computes a sequence of hidden states $h_1, \ldots, h_T \in \RR^{d_h}$ according to 
\begin{equation}
    \label{eq:ORNN_hidden}
    h_{t}  =  W h_{t-1} + U x_{t} + b_i,
\end{equation}
where $h_0=0$, a matrix $U\in\RR^{d_h\times d_{in}} $, $b_i\in\RR^{d_h}$, and the recurrent weight matrix $W \in \RR^{d_h \times d_h}$ is constrained to be orthogonal (i.e. $W^\transpose W =W W^\transpose = I_{d_h}$, where $W^\transpose$ is the transpose of $W$, and $I_{d_h}$ is the identity matrix of size $d_h\times d_h$). Depending on the task, the output is either the vector $V\sigma(h_T)+b_o\in\RR^{d_{out}} $ or the time series $V\sigma(h_1)+b_o$, \ldots, $V\sigma(h_T)+b_o$. The matrix $V\in \RR^{d_{out}\times d_h}$ is the output matrix, and $\sigma$ is the activation function.

The orthogonality of the recurrent weight matrix enhances memorization and prevents gradient vanishing. These networks have been shown to solve complex tasks with long-term dependencies, such as the copy task with 1000 timesteps or more \cite{lezcano2019cheap, helfrich2018orthogonal, Vorontsov2017OnOA, DBLP:conf/icml/MhammediHRB17}. They also lead to simple RNNs, whose inference complexity scales linearly with sequence length.

The most common choice in ORNNs is to apply the ReLU activation function, $\sigma$, to each hidden state update. The formula then becomes $h_{t}  = \sigma( W h_{t-1} + U x_{t} + b_i)$, and the output is simply $Vh_T+b_o$. In \eqref{eq:ORNN_hidden}, we consider ORNNs with linear recurrent units. The use of the linear recurrent unit improves memorization (see \cref{memorization-appendix}) and is also motivated by studies on SSMs \citep{gu2022efficiently, orvieto2023resurrecting}. We provide an ablation study to support this choice in \cref{sec:ablation}.

\subsection{Introduction to Hadamard matrices theory}\label{hadam-sec}

Before describing how we construct binary or sparse ternary recurrent weight matrices in Sections \ref{hrnn-sec} and \ref{bhrnn-sec}, we first recall known properties of Hadamard matrices \citep{hedayat1978hadamard}, and explain how, under simple conditions, we can parameterize a subset of all Hadamard matrices. 

\begin{definition} Hadamard matrices \citep{hadamard1893resolution} are square matrices with binary values in $\{-1,1\}$, whose rows are pairwise orthogonal.
For any $n \in \NN^*$, we denote by $\calH_n$ the (possibly empty) set of all Hadamard matrices of size $n \times n$.
\end{definition}
Notice that for any $n>1$ and any Hadamard matrix $W$ of size $n\times n$, we have 
\begin{equation}\label{hadam_def_eq}
W W^\transpose = n I_n.
\end{equation}
It is well known  that for $n > 2$, 
Hadamard matrices of size $n \times n$ do not exist unless $n$ is a multiple of 4 \citep{hedayat1978hadamard}.
The existence 
of Hadamard matrices of size 
$4n\times 4n$ for all $n>1$ remains a conjecture. It is called the Hadamard conjecture \citep{de2001comment}.
It is therefore hopeless to attempt learning an optimal matrix in $\calH_n$ for an arbitrary $n$.

The following proposition outlines a straightforward method, introduced in \citet{sylvester1867lx}, to construct a Hadamard matrix of size $2^k \times 2^k$ for any $k \geq 1$.

\begin{prop}
    \label{prop:hadamard_pow2}
    Let $k \geq 1$. The $2^k \times 2^k$ matrix, denoted $\Walsh{k}$, defined recursively by
    \begin{equation*}
    \Walsh{} = \begin{pmatrix}
    1 & 1 \\
    1 & -1
    \end{pmatrix} \qquad (\mbox{i.e. if }k =1),
\quad     \mbox{ and } \quad
    \Walsh{k} = \begin{pmatrix}
    \Walsh{k-1} & \Walsh{k-1} \\ & \\
    \Walsh{k-1} & -\Walsh{k-1}
    \end{pmatrix}\qquad,\mbox{ if }k > 1,
    \end{equation*}
     is a Hadamard matrix. It is called the Sylvester matrix\footnote{These matrices are also called Walsh matrices in some contexts.} of size $2^k$ \citep{horadam2007hadamard}.
\end{prop}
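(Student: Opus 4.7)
The plan is to prove the statement by induction on $k$, verifying at each step the two defining conditions of a Hadamard matrix, namely that entries lie in $\{-1,1\}$ and that rows are pairwise orthogonal.

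For the base case $k=1$, the matrix $\Walsh{1}$ is displayed explicitly; its four entries are in $\{-1,1\}$, and the inner product of its two rows is $1\cdot 1 + 1\cdot(-1)=0$, so $\Walsh{1}\in\calH_2$. Equivalently, a direct computation yields $\Walsh{1}\Walsh{1}^\transpose = 2 I_2$, which matches \eqref{hadam_def_eq}.

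For the inductive step, I assume that $\Walsh{k-1}$ is a Hadamard matrix of size $2^{k-1}\times 2^{k-1}$, so that its entries are in $\{-1,1\}$ and $\Walsh{k-1}\Walsh{k-1}^\transpose = 2^{k-1}I_{2^{k-1}}$. Since $\Walsh{k}$ is built block-wise from $\pm\Walsh{k-1}$, its entries are clearly in $\{-1,1\}$. To verify orthogonality of the rows, I compute $\Walsh{k}\Walsh{k}^\transpose$ using the block form: the two diagonal blocks each equal $\Walsh{k-1}\Walsh{k-1}^\transpose+\Walsh{k-1}\Walsh{k-1}^\transpose = 2^k I_{2^{k-1}}$, while the two off-diagonal blocks each equal $\Walsh{k-1}\Walsh{k-1}^\transpose-\Walsh{k-1}\Walsh{k-1}^\transpose = 0$. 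Hence $\Walsh{k}\Walsh{k}^\transpose = 2^k I_{2^k}$, which shows that the rows of $\Walsh{k}$ are pairwise orthogonal, and therefore $\Walsh{k}\in\calH_{2^k}$.

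There is no real obstacle here: the argument reduces to a $2\times 2$ block matrix multiplication that cleanly exploits the inductive hypothesis $\Walsh{k-1}\Walsh{k-1}^\transpose = 2^{k-1}I_{2^{k-1}}$. The only minor point to note is that one should either work with the characterisation via $WW^\transpose = nI_n$ (which gives pairwise orthogonality of rows directly) or else unpack the block products in terms of the inner products of the rows of $\Walsh{k-1}$; both routes are equally short, and I would adopt the former for concision.
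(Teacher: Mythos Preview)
Your proof is correct. The paper also argues by induction with the same base case, but in the heredity step it observes that the recursive definition is exactly the Kronecker product $\Walsh{k+1}=\Walsh{1}\otimes\Walsh{k}$ and then appeals to a separate lemma (\cref{prop:kron_reserves}) stating that the Kronecker product of two matrices with pairwise orthogonal rows again has pairwise orthogonal rows. Your direct $2\times 2$ block computation of $\Walsh{k}\Walsh{k}^\transpose$ is the explicit unwinding of that lemma in this special case, so it is arguably more self-contained; the paper's route, on the other hand, isolates a reusable fact that is invoked again later when proving orthogonality of the block-diagonal matrices $I_q\otimes\Walsh{k}$ underlying the \BHRNN{} construction.
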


The proof is provided for completeness in  \cref{proof:hadamard_pow2}.

For any $n > 1$, if a Hadamard matrix of size $n \times n$ is known, the following proposition provides a simple method for generating $2^n$ distinct Hadamard matrices. In the proposition, the notation $\diag(u)\in\RR^{n\times n}$ refers to a diagonal matrix with $u\in\RR^n$ on its diagonal.
\begin{prop}
    \label{prop:switch_lines}
   For $n>1$ and any $H \in \calH_n$, the mapping
    \begin{align*}
    \phi_H : \{-1,1\}^n &\longrightarrow \{-1,1\}^{n \times n} \\
    u &\longmapsto \diag(u) H,
    \end{align*}
    is injective. Moreover, for all $u \in \{-1,1\}^n$, $\phi_H(u)$ is a Hadamard matrix.
\end{prop}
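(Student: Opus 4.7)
The plan is to verify the two claims separately, both by direct matrix manipulation, and both are easy given the properties of Hadamard matrices recalled in \eqref{hadam_def_eq}.

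For the Hadamard property, I would first observe that $\phi_H(u) = \diag(u) H$ has entries in $\{-1,1\}$ since it is an entrywise product of $\pm 1$ values (each entry of row $i$ is $u_i$ times the corresponding entry of row $i$ of $H$). It then remains to check that the rows are pairwise orthogonal. The cleanest way is to compute
\begin{equation*}
\phi_H(u)\, \phi_H(u)^\transpose \;=\; \diag(u)\, H H^\transpose \diag(u)^\transpose \;=\; \diag(u)\, (n I_n)\, \diag(u) \;=\; n\, \diag(u^2) \;=\; n I_n,
\end{equation*}
where we used \eqref{hadam_def_eq} and the fact that $u_i^2 = 1$ for every $i$ because $u\in\{-1,1\}^n$. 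This is equivalent to the pairwise orthogonality of the rows, so $\phi_H(u)\in\calH_n$.

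For injectivity, I would argue as follows. Since $H \in \calH_n$ satisfies $H H^\transpose = n I_n$, the matrix $H$ is invertible (with $H^{-1} = \tfrac{1}{n} H^\transpose$). Therefore, if $\phi_H(u) = \phi_H(v)$, i.e.\ $\diag(u) H = \diag(v) H$, right-multiplying by $H^{-1}$ gives $\diag(u) = \diag(v)$, and reading off the diagonal entries yields $u = v$. Alternatively, and perhaps more elementarily, one can note that the first column of $H$ contains only $\pm 1$ entries, so the first column of $\diag(u) H$ determines $u_i$ up to sign from the corresponding sign in $H$, which suffices to recover $u$ uniquely from $\phi_H(u)$.

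I do not expect any real obstacle: both facts follow from a one-line matrix identity plus the observation $u_i^2=1$. The only subtlety worth mentioning explicitly is why $H$ is invertible, which is immediate from \eqref{hadam_def_eq}.
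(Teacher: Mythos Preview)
Your proof is correct and follows essentially the same approach as the paper. The only cosmetic difference is that the paper checks orthogonality row by row (computing $\phi_H(u)_i\,\phi_H(u)_j^\transpose = u_i u_j H_i H_j^\transpose = 0$) and proves injectivity by exhibiting a row where $\phi_H(u)$ and $\phi_H(u')$ differ, whereas you package both arguments into single matrix identities; the underlying ideas are the same.
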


The proof is provided for completeness in  \cref{proof:switch_lines}. This proposition guarantees that switching the signs of any set of rows of a Hadamarad matrix preserves its Hadamard property. Considering the matrices $\phi_{\Walsh{k}}(u)$, for $u \in \{-1,1\}^{d_h}$,
\cref{prop:hadamard_pow2} and \cref{prop:switch_lines} 
provide a method for manipulating $2^{2^k}$ Hadamard matrices of size $2^k \times 2^k$, for $k \geq 1$. For instance, when $d_h = 2^8 = 256$, this allows the generation of more than $10^{77}$ different matrices. The experiments will confirm that this class of matrices possesses sufficient expressiveness.

Note that even if, in the following, we only use \cref{prop:switch_lines} with Sylvester matrices whose size is a power of $2$, \cref{prop:switch_lines} applies to any given Hadamard matrix $H$. Many such matrices exist. In particular, Hadamard matrices can be constructed for almost all sizes $4n \times 4n$, when $4 n\leq 2000$ \citep{djokovic2014some}. This would allow parameter $d_h$ to be set more finely than we have done.

A key point is that, due to the independence of $u$'s components, empirical results show that $u$ can be optimized using standard methods like the straight-through estimator (STE) \citep{hinton2012nnml, Courbariaux2016BinaryNetTD}, as described in \cref{appencice_STE_ex}.

Finally, similarly to what has been done in \cref{prop:switch_lines}, it can also be shown that it is possible to switch the signs of any set of columns of a Hadamard matrix and preserve the Hadamard property. We argue in \cref{colonne-appendix} that, because we also optimize the input matrix $U$ and the input bias $b_i$, it does not lead to more expressive networks. Additionally, the ablation study in \cref{ablation-colonnes-sec} shows that it does not allow for any improvement in practice.

\subsection{Binary orthogonal recurrent weight matrices}\label{hrnn-sec}
To parameterize the binary orthogonal recurrent weights used in our network, reffered to as Hadamard RNN (\HRNN), we consider $d_h=2^k$, for $k\geq 1$, and the weights
\begin{equation} \label{binary_W-eq}
W(u) = \frac{1}{\sqrt{d_h}} \diag(u) \Walsh{k} ~\in\RR^{d_h \times d_h},
\end{equation}
 for a trainable binary vector $u\in\{-1,1\}^{d_h}$. Indeed, using that $\diag(u) \Walsh{k}$ is a Hadamard matrix satisfying \eqref{hadam_def_eq}, we obtain that $W(u)$ is orthogonal: $W(u)^\transpose W(u) = W(u)W(u)^\transpose = I_{d_h}$. The proof is provided in  \cref{proof:ternaryOrtho}. We also detail an example in \cref{appencice_STE_ex}. It is worth noting that if $k$ is even, $d_h=2^{2k'}$, for $k'\geq 1$, then the normalization becomes a division by $\sqrt{d_h} = 2^{k'}$, which is well-suited for efficient implementation on edge devices.

\subsection{Sparse ternary orthogonal recurrent weight matrices}\label{bhrnn-sec}

To construct the sparse ternary orthogonal recurrent weights used in  Block-Hadamard RNNs (\BHRNN s), we consider $d_h = q2^k$, where $k \geq 1$ and $q \geq 1$, with the weights defined as
\begin{equation}\label{ternary_W-eq}
W(u) = \frac{1}{\sqrt{2^k}} \diag(u) \bigl( I_q\otimes \Walsh{k} \bigr) ~\in\RR^{d_h \times d_h},
\end{equation}
for a trainable binary vector $u\in\{-1,1\}^{d_h}$ and the Kronecker product $\otimes$ (see \cref{Kroneker-sec}). The matrix $W(u)$ is ternary since its components are in $\left\{-\frac{1}{\sqrt{2^k}},0,\frac{1}{\sqrt{2^k}}\right\}$. 
It is orthogonal for the same reasons outlined in the previous section. The proof is detailed in  \cref{proof:ternaryOrtho}.

The proportion of non-zero components in $W(u)$ is $\frac{q(2^k)^2}{(q2^k)^2} = \frac{1}{q} $. When $q$ is large, the matrix $W(u)$ is very sparse. On the contrary, when $q=1$, none of the components of $W(u)$ is zero, and the Block-Hadamard RNN effectively becomes a Hadamard RNN as described in the previous section. In this sense, Block-Hadamard RNNs are a natural sparse ternary extension of Hadamard RNNs.



\subsection{Matrices $U$ and $V$ quantization}\label{quant_U_V-sec}

Because input and output sizes are often much smaller than the size of the hidden space (i.e. $d_{in} \ll d_h$ and $d_{out} \ll d_h$), we permit the quantization of the input and output weight matrices, $U$ and $V$, using $p$ bits, where $p \geq 2$. We use the uniform quantization with a scaling parameter \citep{gholami22surveyquant}.

The quantization approximates every component of $U$ (resp. $V$) by its nearest element in the set 
\[\frac{\Max}{2^{p-1}} \left\lb -2^{p-1}, 2^{p-1} - 1 \right\rb.
\]
where $\Max = \max_{i,j} |U_{ij}|$ (resp. $\alpha = \max_{i,j} |V_{ij}|$) and the set $\lb a,b\rb$ contains all the integers between $a$ and $b$. The above set contains $2^p$ elements.

To obtain ternary $U$ and $V$, leading to matrix-vector multiplications involving only additions, we also provide the results for the quantization approximating each component of $U$ (resp $V$) in $\Max \{ -1, 0, 1 \}$, for the same values of $\alpha$.

Moreover, when the input is one-hot encoded, only one column of $U$ is used at a time. The product $Ux_t$ can take only $d_{in}$ values and, from a computational perspective, can be encoded as the input bias $b_i$ and the activation.



\subsection{Model size and computational complexity}\label{Complexity-sec}


We compare the different models in terms of parameter storage requirements and the number of operations during inference.

The model size is determined by the total number of learnable parameters, multiplied by the number of bits used to encode each parameter. In both \HRNN s and \BHRNN s, the recurrent layer requires $d_h$ bits for encoding the binary vector $u$, the input matrix $U$ requires $d_{in}d_hp$ bits, and the output matrix $V$ requires $d_hd_op$ bits. Therefore, the total model size is given by $\frac{d_h(1+(d_{in}+d_o)p)}{8\times 1024}$ kBytes\footnote{The biases $b_i$ and $b_o$ use $(d_h+d_o)p_a$ bits.}. It is important to note that using Sylvester matrices $\Walsh{k}$ eliminates the need to store their weights, as these can be easily retrieved using \cref{prop:hadamard_pow2}.

The number of operations of the inference using \HRNN{}~and \BHRNN{}~is detailed in \cref{tab:computation_complexity}. We assume in \cref{tab:computation_complexity} that the hidden and input variables, $h_t$ and $x_t$, are encoded using $p_a$ bits. 
A detailed description of the fully quantized RNN operations is given in \cref{sec:ptq}.
\HRNN s and \BHRNN s use binary (or ternary) recurrent matrices, which eliminates the need for multiplications.
Similarly, for ternary $U$ and $V$, we set $\fp_{p,p'}=0$ since the matrix-vector multiplications only involve additions. When $p=2$, the matrix-vector multiplications involving matrices $U$ and $V$ in $\{-2,-1,0,1\}$ only involve additions and bit-shifts. For the same $d_h$ value ($d_h = 2^k = q.2^{k'}$ with $k'<k$), the computational complexity of the recurrent layer of \BHRNN{}~is $q$ times lower than that of \HRNN{}.

For comparison, we also provide the complexity for the inference with full-precision ORNN \citep{arjovsky2016unitary} and the only quantized ORNN that we are aware of: QORNN \citep{foucault2024quantized}. The complexities of \HRNN~and \BHRNN~are much smaller, in particular, because, as will be reported in \cref{sec:expe}, they permit to achieve satisfactory results for $p=2$ when, as reported in \citet{foucault2024quantized}, QORNNs require at least $p=4$ bits encoding.

Note that when the inputs $x_t$ are one-hot encoded, computing $Ux_t$ requires no multiplications and only $d_h$ additions. This further reduces the complexity compared to \cref{tab:computation_complexity}. 

\begin{table}[h]
  \centering
      \caption{Computational complexity for an inference of the RNN. We neglect the bit-shifts and the accesses to the look-up tables. $\FP$ stands for in floating-point arithmetic, $\fp$ stands for fixed-point precision additions, $\fp_{p,p_a}$ stands for fixed-point precision multiplications between numbers coded using $p$ and $p_a$ bits. We have $\fp_{t,p_a}=\fp_{2,p_a} = 0 $, where $\fp_{t,p_a}$ is for ternary matrices.}
  \label{tab:computation_complexity}
  \vspace{0.1cm}
  \begin{tabular}{llllll}
    \toprule
       Layer & Operation &   ORNN & QORNN   & \HRNN{} & \BHRNN{}\\
        &  &    &    & $d_h=2^k$ & $d_h=q2^k$ \\
    \midrule
       Input & Mult.&    $d_{in}. d_h~\FP$ & $d_{in}.d_h ~ \fp_{p,p_a}$ 
       &  idem 
       & idem 
       \\
       & Add. &$d_{in}. d_h ~\FP$ & $d_{in}. d_h ~\fp$ & idem 
       & idem 
       \\
    \midrule
    Recurrent & Mult.&   $d_h. d_h~\FP$ & $d_h.d_h ~ \fp_{p,p_a}$ &  $0$ & $0$\\
      & Add. &$ d_h. d_h ~\FP$ & $ d_h.d_h~\fp$ &  $ d_h.d_h~\fp$ & $d_h. \frac{d_h}{q} ~\fp
$\\
    \midrule
       Output  &Mult.&    $d_h. d_{out}~\FP$ & $d_h.d_{out}~\fp_{p,p_a}$  & idem
       & idem
       \\
       &Add. &$d_h. d_{out} ~\FP$ & $d_h. d_{out} ~\fp$ & idem 
       & idem 
        \\
    \bottomrule
  \end{tabular}

\end{table}

\section{Experiments}
\label{sec:expe}



In this section, we assess the performance of \HRNN{} and \BHRNN{} on four standard benchmark datasets. These datasets are described in \cref{sec:dataset}. Three tasks require retaining information over extended periods, while the fourth focuses on a Natural Language Processing (NLP) task with shorter sequences but larger input dimensions.
In \cref{sec:perf}, we present the performance of \HRNN{}~and \BHRNN{} and compare them to that of previously published quantized and full-precision (FP) models.
We conduct ablation studies in \cref{sec:ablation} and \cref{ablation-colonnes-sec}.


\subsection{Datasets}
\label{sec:dataset}

We investigate lightweight neural networks for time series and select datasets suited to these architectures. In particular, this excludes the Long Range Arena benchmarks \citep{tay2020long}, which are too complex for full-precision ORNNs, LSTMs, and GRUs. To illustrate the limitations of the proposed models, we include the sequential MNIST for which LSTMs are known to outperform ORNNs.
\noindent{\bf Copy task}
The Copy task is a standard sequential problem first introduced in \citep{hochreiter1997long}. This task requires memorizing information over many timesteps, and vanilla  LSTMs are notoriously unable to solve it for long sequences \citep{arjovsky2016unitary, helfrich2018orthogonal, lezcano2019cheap}. We follow the setup of \citet{lezcano2019cheap}, in which the data sequences are constructed as follows. We consider an alphabet $\left\{a_k\right\}_{k=0}^9$ of $10$ characters. Given a sentence length $K$ and a delay $L$, the first $K$ elements of an input sequence are  sampled uniformly and independently from $\left\{a_k\right\}_{k=1}^8$. These are followed by $L$ repetitions of the \textit{blank} character $a_0$, one instance of the \textit{marker} $a_9$, and $K-1$ repetitions of $a_0$. The first $K$ elements form a sentence that the network must memorize and reproduce identically after outputting $L + K$ instances of $a_0$. 

In our experiments, we fixed $K=10$ and $L=1000$ ($T=L+2K=1020$, $d_{in}=10$, $d_{out}=9$). The loss function is the cross-entropy, which is also used to measure performance. A naive baseline consists of $L+K$ repetitions of $a_0$, followed by $K$ random values. This leads to a baseline cross-entropy of $\frac{10\log 8}{L+2K}=0.021$.

\noindent{\bf Permuted and sequential pixel-by-pixel MNIST (pMNIST/sMNIST)}
They are also classic long-term memory tasks. From the MNIST dataset, the $28 \times 28$ images are serialized into $784$-long sequences of $1$-dimensional $8$-bits pixel values ($T=784$, $d_{in}=1$, $d_{out}=10$). The serialization is done pixel-by-pixel for sMNIST.
For pMNIST, a fixed permutation is used to shuffle  the pixels within each sequence. We apply the same permutation as \citet{kiani2022projunn}. The task is to predict the correct handwritten digit label at the last step. The learning loss is the cross-entropy, and the model's performance is evaluated with accuracy.

\noindent{\bf IMDB}
This dataset, proposed in \citet{maas2011learning}, is an NLP 
binary classification task for sentiment analysis based on 50,000 movie reviews.  As in \citet{he2016effective}, we pad and cut the sentences to 500 words, and use a learnable word embedding vector of size 512 ($T=500$, $d_{in}=512$, $d_{out}=1$). The learning loss is the binary cross-entropy, and the model's performance is evaluated with accuracy.

\noindent{\bf Other Datasets}
We provide in \cref{other-benchmarks} a comparison of \HRNN{}, \BHRNN{}, and transformers on the SST-2 and QQP benchmarks from GLUE \citep{wang2018glue}, as well as with RNNs on IoT task benchmarks: HAR-2, and DSA-19, as described in \cite{kusupati2018fastgrnn}.

\subsection{Performance evaluation}
\label{sec:perf}

The evaluation is organized as follows. In \cref{hadamrnn-sec} and \cref{tab:full_results}, we compare the results of \HRNN{} to those of the state-of-the-art. In \cref{block-hadamrnn-sec} and \cref{tab:block_full_results}, we compare the results of \BHRNN{} and \HRNN{}.

For each task, hyperparameters were selected using validation sets, and final performance was evaluated on test sets.
Details on the hyperparameters and implementations are provided in \cref{app:expe_details}. 

Training times are provided in \cref{app_train_time}. Training stability is analyzed in \cref{app_train_stability}.

\subsubsection{\HRNN{} versus the state-of-the-art}\label{hadamrnn-sec} 

Since we aim to design high-performance RNN architectures adapted to low-memory devices, we assess the models' performance based on two criteria: the model size of each architecture and its classification accuracy or cross-entropy for the copy task. The model size of \HRNN{} is calculated as described in \cref{Complexity-sec}. LSTM recurrent matrix is the identity matrix and is coded using $0$ bits. We also report the performance of fully quantized \HRNN{} using the post-training quantization strategy for activations detailed in \cref{sec:ptq}. The main performance results on the four benchmark datasets are summarized in~\cref{tab:full_results}. The plots in \cref{app_plots_1}, based on the results from \cref{tab:full_results}, highlight the efficiency of \HRNN{} in terms of model size.

\noindent{\bf State-of-the-art} We compare \HRNN{} with state-of-the-art full-precision and quantized models designed for time series modeling:
\begin{itemize}
    \item Full-precision LSTM \citep{jing2017tunable, kiani2022projunn} and quantized LSTM \citep{ardakani2018learning,Alom2018EffectiveQA,he2016effective} are known to fail the $1000$-timesteps Copy task, and to be well-suited for modeling the sMNIST problem. LSTM also serves as an optimistic proxy for other gated models such as GRU, since LSTM is known to usually perform better. 
    \item Similarly to \HRNN{}, ORNN \citep{kiani2022projunn} and QORNN \citep{foucault2024quantized} are instances of RNNs with orthogonal recurrent weight matrix, but operating with different bitwidth. This comparison evaluates the performance degradation caused by binarization.

    \item FastGRNN \citep{kusupati2018fastgrnn} was designed to address similar problems as unitary RNNs \citep{arjovsky2016unitary,helfrich2018orthogonal}, but with significantly smaller size. Since \HRNN{} is also remarkably lightweight, it appears relevant to compare the two models.

    \item As stated in \cref{sec:intro}, quantized SSMs perform poorly when the bitwidth is smaller than $8$ bits \citep{abreu2024q}. 
    Also, as discussed in \cref{sec:related} and \cref{glue_b_sec}, the differences in size and complexity between transformers and \HRNN{} render the comparison meaningless. For these reasons, we do not include the results of transformers and SSMs.
\end{itemize}

\begin{table}[t]
    \small
    \centering
    \caption{Comparison of \HRNN{} and the state-of-the-art on several benchmark datasets. Last column reports the model size in kBytes. BL (baseline) means that the model failed to learn.
    }
    \label{tab:full_results}
    \begin{tabular}{lcccccc}
        \toprule
         Model & $d_h$ & W & U \& V  & activation & performance & size\\
         & & bitwidth & bitwidth & bitwidth & & kBytes\\
         \midrule
         \midrule 
          \multicolumn{7}{c}{{\bf Copy task} ($T=1000$, $d_{in}=10$, $d_{out}=9$, cross-ent, baseline = $0.021$)} \\
          \midrule
          \midrule
        LSTM (\cite{jing2017tunable}) & 80 & FP & FP & FP & BL & 112 \\
        ORNN (\cite{kiani2022projunn}) & 256 & FP & FP & FP & 1.1e-12 & 275 \\
        QORNN (\cite{foucault2024quantized}) & " & 8 & 8 & 12 & 1.7e-5 & 75.5 \\
        QORNN (\cite{foucault2024quantized}) & " & 5 & 5 & 12 & 2.5e-3 & 50.6 \\
        \cline{2-7}
         \multirow{5}{*}{\HRNN{} (ours)} &  \multirow{5}{*}{128} & \multirow{5}{*}{1} & 2 & FP &  BL & 1.44 \\
          & & & 4 & FP & 1.6e-7 & 1.74 \\
          & & & 4 & 12 & 2.3e-7 & 1.40 \\
          & & & 6 & FP & 3.2e-8 & 2.33 \\
          \midrule
          \midrule
         \multicolumn{7}{c}{{\bf Permuted MNIST} ($T=784$, $d_{in}=1$, $d_{out}=10$, accuracy)} \\
          \midrule
          \midrule
        ORNN (\cite{kiani2022projunn})  & 512 & FP & FP & FP & 97.00 & 1046.0 \\
        ORNN (\cite{kiani2022projunn})  & 170 & FP & FP & FP & 94.30 & 120.2 \\
        LSTM (\cite{kiani2022projunn})  & " & FP & FP & FP & 92.00 & 456.9 \\
        QORNN (\cite{foucault2024quantized}) & " & 8 & 8 & 12 & 94.76 & 35.0 \\
        QORNN (\cite{foucault2024quantized}) & " & 6 & 6 & 12 & 93.94 & 27.9 \\
        \cline{2-7}
          \multirow{5}{*}{\HRNN{} (ours)} & \multirow{5}{*}{512} & \multirow{5}{*}{1} & 2 & FP & 91.13 & 3.48 \\
          & & & 4 & FP & 94.88 & 4.85 \\
          & & & 4 & 12 & 94.90 & 3.58 \\
          & & & 6 & FP & 95.85 & 6.23 \\
         \midrule
         \midrule 
         \multicolumn{7}{c}{{\bf Sequential MNIST} ($T=784$, $d_{in}=1$, $d_{out}=10$, accuracy)} \\
         \midrule
         \midrule 
        LSTM (\cite{ardakani2018learning}) & 100 & 0 & 1 & 12 & 98.6 & 5.11 \\ 
        FastGRNN (\cite{kusupati2018fastgrnn}) & 170 & 8 & 8 & 16 & 98.2 & 6.0 \\
        ORNN (\cite{lezcano2019cheap}) & 512 & FP & FP & FP & 98.7 & 1046 \\QORNN (\cite{foucault2024quantized}) & 170 & 8 & 8 & 12 & 96.2 & 35.0 \\
        QORNN (\cite{foucault2024quantized}) & 170 & 6 & 6 & 12 & 94.74 & 27.9 \\
        \cline{2-7}
          \multirow{5}{*}{\HRNN{} (ours)} & \multirow{5}{*}{512} & \multirow{5}{*}{1} & 2 & FP & 92.65 & 3.48 \\
          & & & 4 & FP & 96.63 & 4.85 \\
          & & & 4 & 12 & 96.34 & 3.58 \\
          & & & 6 & FP & 96.9 & 6.23 \\
          \midrule
          \midrule
         \multicolumn{7}{c}{{\bf IMDB} ($T=500$, $d_{in}=512$, $d_{out}=1$, accuracy)} \\
          \midrule
          \midrule
         
        LSTM (\cite{Alom2018EffectiveQA}) & 128 & \multirow{4}{*}{0} & 1 & FP & 76.25 & 40.50 \\
        LSTM (\cite{Alom2018EffectiveQA}) &  128 &  & 2 & FP & 79.64 & 80.5 \\
        LSTM (\cite{he2016effective}) & 512 &  & 2 & 2 & 88.12 & 514 \\
        LSTM (\cite{he2016effective}) & 512 &  & 4 & 4 & 88.48 & 1026 \\
        ORNN & 128 & FP & FP & FP & 84.02 & 320.5 \\
        \cline{2-7}
          \multirow{5}{*}{\HRNN{} (ours)} & \multirow{3}{*}{128} & & ternary & FP & 81.18 & 16.55 \\
           &  & & 2 & FP & 85.34 & 16.55 \\  
          & & & 2 & 12 & 85.18 & 16.24 \\
          \cline{2-7}
          & \multirow{2}{*}{512} & \multirow{2}{*}{1} & 4 & FP & 87.43 & 130.32 \\  
          & & & 4 & 12 & 87.13 & 129.06 \\
         \bottomrule
    \end{tabular}
\end{table}

\noindent{\bf Copy-task} Notably, \HRNN{} is the first binary recurrent weights solution capable of learning the long-term dependency of the Copy task. Note that full-precision and quantized LSTMs do not learn the Copy task when $T=1020$. The proposed \HRNN{}
outperforms the 5-bits QORNN 
introduced in~\cite{foucault2024quantized}, while requiring a smaller recurrent size $d_h$. The resulting model size reduction is over 36 ($=50.6/1.40$). This improvement stems from the orthogonal nature of the Hadamard matrices, which enables better learning of long-term dependencies compared to QORNNs, where the matrices are only approximately orthogonal. This is also attributed to the choice of a linear recurrent unit, which enhances memorization (see \cref{memorization-appendix}). 

\noindent{\bf pMNIST} Similarly, on pMNIST, the fully quantized \HRNN{}, with a size of just $3.58$ kB, outperforms the QORNN of \citet{foucault2024quantized}, which requires $35$ kB, and even a full-precision ORNN of $120$ kB. Additionally, the \HRNN{} model of $6.2$ kB achieves only $1.2\%$ lower accuracy compared to an ORNN with $d_h=512$ of much larger size, $1046$ kB. All \HRNN{} architectures, except for the smallest one, outperform full-precision LSTMs.

\noindent{\bf sMNIST}
For this task, which typically favors gated models like LSTM and FastGRNN, the \HRNN{} model of size 3.58kB
achieves an accuracy that is only $2\%$ lower compared to LSTMs and FastGRNNs of the same size, and a full-precision ORNN of size 1046kB.
It still outperforms the QORNN, achieving 96.9\% accuracy with a model size that is 5.6 ($=35/6.23$) times smaller.

\noindent{\bf IMDB}  
The smallest \HRNN{} models with $d_h=128$ outperform \citet{Alom2018EffectiveQA} binary LSTM, while being 2.4 ($= 40.5/16.55$) times smaller, and when increased to $d_h=512$ with a size of 129 kB, it achieves 10\% higher accuracy compared to the 40.5 kB model. It also outperforms the full precision ORNN with $d_h=128$.
The 2-bits (resp. 4-bits) LSTM proposed in \cite{he2016effective} only offers 1\% accuracy advantage over \HRNN{} at the same recurrent size $d_h=512$ but their model size is 4 (resp. 8) times larger.

\noindent{\bf Activation quantization} Regardless of the task, the results of \cref{tab:full_results} demonstrate that post-training quantization of activations, as described in \cref{sec:ptq}, does not degrade performance.


\subsubsection{\BHRNN{} versus \HRNN{}}\label{block-hadamrnn-sec} 

\begin{table}[ht]
    \centering
    \caption{Comparison of the performances of \BHRNN{} (with paramer $q$) and \HRNN{} ($q=1$) on several benchmarks. The last column reports the computational complexity of the recurrent operation $W h_t$, measured in fixed-point precision additions (see \cref{tab:computation_complexity}). 
    The quantization bitwith of matrices $U$ and $V$ is $4$ and the activations are not quantized. 
    }
    \label{tab:block_full_results}
    \begin{tabular}{cccccc}
        \toprule
         Model & $d_h$ & W  & parameter $q$ & performance & computational \\
         & & bitwidth & in \eqref{ternary_W-eq} & & complexity\\
         \midrule
         \midrule 
         \multicolumn{6}{c}{{\bf Copy task} ($T=1000$, $d_{in}=10$, $d_{out}=9$, cross-ent, baseline = $0.021$)} \\
          \midrule
          \midrule
         \multirow{3}{*}{\BHRNN} &\multirow{3}{*}{128} & \multirow{3}{*}{ternary} &
            32 & 1.6e-3 & 512 \\
           & & & 8 & 6.6e-5 & 2048 \\
           & & & 2 & 2.0e-6 & 8192 \\
           \cline{2-6}
          \HRNN & 128 & 1 & 1 & 1.6e-7 & 16,384 \\
          \midrule
          \midrule
         \multicolumn{6}{c}{{\bf Permuted MNIST} ($T=784$, $d_{in}=1$, $d_{out}=10$, accuracy)} \\
          \midrule
          \midrule
        \multirow{4}{*}{\BHRNN} &\multirow{4}{*}{512} &
        \multirow{4}{*}{ternary} & 128
        & 60.74 & 2048 \\
           & & & 32 & 91.42 & 8192 \\
           & & & 8 & 91.45 & 32,768 \\
           & & & 2 & 93.12 & 131,072 \\
           \cline{2-6}
          \HRNN & 512 & 1 & 1 & 94.88 & 262,144 \\
         \midrule
         \midrule 
         \multicolumn{6}{c}{{\bf Sequential MNIST} ($T=784$, $d_{in}=1$, $d_{out}=10$, accuracy)} \\
         \midrule
         \midrule 
           \multirow{4}{*}{\BHRNN} & \multirow{4}{*}{512} & \multirow{4}{*}{ternary} & 128
           & 27.45 & 2048 \\
          & & & 32 & 80.60 & 8192 \\
          & & & 8 & 92.49 & 32,768 \\
          & & & 2 & 96.47 & 131,072 \\
          \cline{2-6}
          \HRNN & 512 & 1 & 1 & 96.63 & 262,144 \\
          \midrule
          \midrule
         \multicolumn{6}{c}{{\bf IMDB} ($T=500$, $d_{in}=512$, $d_{out}=1$, accuracy)} \\
          \midrule
          \midrule
         \multirow{4}{*}{\BHRNN} & \multirow{4}{*}{512} & \multirow{4}{*}{ternary} &
         128 & 81.83 & 2048 \\
          & & & 32 & 84.27 & 8192 \\
          & & & 8 & 85.70 & 32,768 \\
          & & & 2 & 86.30 & 131,072 \\
          \cline{2-6}
          \HRNN & 512 & 1 & 1 & 87.43 & 262,144 \\
         \bottomrule
    \end{tabular}
\end{table}

When \BHRNN{} and \HRNN{} share the same hidden-space dimension, they have equal size. Their main difference lies in the computational complexity of their recurrent unit, which, as explained in \cref{Complexity-sec} and \cref{tab:computation_complexity}, is $q$ times lower for \BHRNN{}.  In \cref{tab:block_full_results}, we therefore report the computational complexity and the performance for the four considered tasks. 

For a given size $d_h$, reducing $q$ improves performance. On the contrary, increasing $q$ improves computational complexity.
For instance, \BHRNN{} with $q=8$ consistently solves the copy task with only $2048$ fixed-point precision additions, compared to the $16,384$ ones required by the \HRNN{}. In addition, for the copy-task and sMNIST, \BHRNN{} with $q=2$ has a negligible drop of accuracy compared to \HRNN{}, while calculating a product with its recurrent weight matrix requires only half the number of fixed-point precision additions.
This, along with the plots in \cref{app_plots_1} based on the results from \cref{tab:block_full_results}, demonstrates that \BHRNN{} facilitates exploration of the trade-off between computational complexity and performance, offering flexible control over both performance and resource utilization.

\section{Conclusion}
\label{sec:ccl}

Drawing on Hadamard matrix theory, this article presents a method for parameterizing a subset of binary and sparse ternary orthogonal matrices. We demonstrate that the parameters of such matrices can be learned using standard methods like the straight-through estimator (STE), and empirically validate that this subset is sufficiently expressive to solve standard RNN benchmarks. This work is the first to construct efficient orthogonal RNNs with binary and sparse ternary recurrent weight matrices. This was recognized as a challenging problem by \citet{Ott2016RecurrentNN} and has not been addressed since. Experimental results show that the proposed \HRNN{} matches the performance of floating-point ORNNs while reducing the model size by up to 290-fold. Notably, it is the first binary recurrent weight model capable of learning the copy task with more than 1,000 timesteps. With the proposed sparse-ternary models, \BHRNN{},  we offer ways to fine-tune the balance between performance and computational efficiency. 
 Future work could explore the following directions: (1) binarizing or ternarizing Structured State Space Models to address tasks with even longer-range dependencies, such as those in the Long Range Arena benchmark \citep{tay2020long}; (2) deploying \HRNN{}s on edge devices; and (3) applying binary orthogonal matrices to other domains, including time series forecasting~\citep{wu2021autoformer,zhou2021informer,zhou2022fedformer}, neural network robustness~\citep{cisse2017parseval,anil_sorting_2019}, Normalizing Flows~\citep{kingma2018glow}, and Wasserstein distance estimation~\citep{brock2018large}.

 %

\section*{Acknowledgments and Disclosure of Funding}
This work has benefited from the AI Interdisciplinary Institute ANITI, which is funded by the French ``Investing for the Future – PIA3'' program under the Grant agreement ANR-19-P3IA-0004. The authors gratefully acknowledge the support of the DEEL project.\footnote{\url{https://www.deel.ai/}} A. Foucault was supported by \lq R\'egion  Occitanie, France\rq, which provided a PhD grant. Part of this work was performed using HPC resources from CALMIP (Grant 2024-P22034).


\bibliographystyle{abbrvnat}
\bibliography{biblio}

\newpage
\appendix
\onecolumn

\section{Bibliography on ORNNs}\label{ornns-biblio-sec}

This appendix provides a detailed bibliography on unitary and orthogonal recurrent neural networks.

Unitary Recurrent Neural Networks (URNNs) were introduced in \citet{arjovsky2016unitary} to capture long-term dependencies more effectively than LSTMs. Several methods have been developed to parameterize recurrent weight matrices in URNNs and orthogonal RNNs. These include using the Cayley transform \citep{wisdom2016fullcapacity,helfrich2018orthogonal}, Givens rotations \citep{jing2017tunable}, Householder reflections \citep{DBLP:conf/icml/MhammediHRB17}, Kronecker matrices \citep{jose2018kronecker}, soft-orthogonality \citep{Vorontsov2017OnOA}, the Singular Value Decomposition (SVD) \citep{zhang2018stabilizing}, the exponential map \citep{lezcano2019cheap}, and Riemannian optimization strategies \citep{kiani2022projunn}. Each method aims to improve model expressivity, efficiency or reducing complexity. 

The only known attempt to quantize ORNN weights is detailed in \cite{foucault2024quantized}. This method enables the learning of challenging tasks, such as the copy task for 1000 timesteps, using 5 bits for the weights and 12 bits for the activations.

\section{Proofs}\label{proof-sec}

We begin this section by reviewing the definition and a proposition on the Kronecker product. Then for completeness, in \cref{proof:hadamard_pow2}, we provide  the proof of \cref{prop:hadamard_pow2} and, in \cref{proof:switch_lines}, we provide the proof of \cref{prop:switch_lines}.

\subsection{Reminders on the Kronecker product}\label{Kroneker-sec}

\begin{definition}
    \label{def:kronecker}
    Let $p, q, r, s \in \NN^*$. Let $A = \left(a_{ij}\right)_{ij} \in \RR^{p \times q}$ and $B \in \RR^{r \times s}$. The Kronecker product of $A$ by $B$, denoted $A \otimes B$, is the matrix of size $pr \times qs$  given by
    \begin{equation*}
        \label{eq:kronecker}
        A \otimes B = \begin{pmatrix}
            a_{11} B & \dots & a_{1q} B \\
            \vdots & \ddots & \vdots \\
            a_{p1} B & \dots & a_{pq} B
        \end{pmatrix}.
    \end{equation*}
\end{definition}

The following proposition states a well-known result concerning the Kronecker product.

\begin{prop}
    \label{prop:kron_reserves}
    Let $p, q, r, s \in \NN^*$. Let $A \in \RR^{p \times q}$ and $B \in \RR^{r \times s}$. If the lines of $A$ are pairwise orthogonal and the lines of $B$ are pairwise orthogonal, then the lines of $A \otimes B$ are pairwise orthogonal.
\end{prop}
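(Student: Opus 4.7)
The plan is to exploit the mixed-product property of the Kronecker product, namely the identity $(A\otimes B)(C\otimes D) = (AC)\otimes (BD)$ whenever the dimensions are compatible, together with the observation that pairwise orthogonality of the rows of a matrix $M$ is equivalent to $MM^\transpose$ being diagonal.

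First I would translate the hypotheses into matrix language: the rows of $A$ being pairwise orthogonal means $AA^\transpose \in \RR^{p\times p}$ is diagonal (its off-diagonal entries are precisely the inner products of distinct rows of $A$), and similarly $BB^\transpose \in \RR^{r\times r}$ is diagonal. Next, I would use the mixed-product property together with the easy fact $(A\otimes B)^\transpose = A^\transpose \otimes B^\transpose$ to write
\begin{equation*}
(A\otimes B)(A\otimes B)^\transpose = (A\otimes B)(A^\transpose \otimes B^\transpose) = (AA^\transpose) \otimes (BB^\transpose).
\end{equation*}
Finally, I would invoke the elementary fact that the Kronecker product of two diagonal matrices is diagonal (which is immediate from \cref{def:kronecker}), so that $(A\otimes B)(A\otimes B)^\transpose$ is diagonal; this gives pairwise orthogonality of the rows of $A\otimes B$.

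As a fallback if one does not wish to assume the mixed-product property, a direct route is to index the rows of $A\otimes B$ by pairs $(i,k)$ and to compute the inner product of rows $(i,k)$ and $(i',k')$ explicitly:
\begin{equation*}
\sum_{j=1}^{q}\sum_{l=1}^{s} a_{ij}b_{kl}\, a_{i'j}b_{k'l} = \Bigl(\sum_{j=1}^{q} a_{ij}a_{i'j}\Bigr)\Bigl(\sum_{l=1}^{s} b_{kl}b_{k'l}\Bigr),
\end{equation*}
and to conclude that for $(i,k)\neq (i',k')$ at least one of the two factors vanishes. There is no real obstacle here; the only subtlety is making sure the ``mixed-product'' identity is either cited or justified in one line, and that the indexing convention on the rows of $A\otimes B$ matches \cref{def:kronecker}.
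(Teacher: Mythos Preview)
Your proposal is correct and follows essentially the same route as the paper: both translate row-orthogonality into ``$MM^\transpose$ is diagonal'' and then show $(A\otimes B)(A\otimes B)^\transpose = (AA^\transpose)\otimes(BB^\transpose) = \diag(\alpha)\otimes\diag(\beta)$. The only cosmetic difference is that you invoke the mixed-product identity as a known fact, whereas the paper derives this particular instance by an explicit block-matrix computation of $CC^\transpose$; your fallback entrywise argument is likewise equivalent to the paper's block computation, just indexed at a finer granularity.
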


We provide the proof for completeness.

\begin{proof}
Let $A\in \RR^{p\times q}$ and $B\in \RR^{r\times s}$ be two matrices. We denote $A_i\in \RR^q$ (resp $B_i\in \RR^s$) the $i^{\mbox{th}}$ line of $A$ (resp $B$). Assume that for all $(i,j)\in\lb1,p\rb^2$ satisfying $i\neq j$, $A_iA_j' = 0$. Assume also that  for all $(m,n)\in\lb1,r\rb^2$ satisfying $m\neq n$, $B_mB_n' = 0$. 

The hypotheses imply that there is $\alpha \in\RR^p$ and $\beta\in\RR^r$ such that
\[AA^\transpose = \diag(\alpha)\qquad\mbox{and}\qquad BB^\transpose = \diag(\beta).
\]
Denoting 
\begin{eqnarray*}
C & = & A\otimes B \\ 
 & = & \begin{pmatrix}
            a_{11} B & \dots & a_{1q} B \\
            \vdots & \ddots & \vdots \\
            a_{p1} B & \dots & a_{pq} B
        \end{pmatrix}.
\end{eqnarray*}
For any $(i,j)\in\lb 1, p \rb$, using block matrix multiplication, the block of size $r\times r$ at position $(i,j)$ of $C C^\transpose$ is  
\[ \sum_{k=1}^q (a_{i,k}B ) (a_{j,k}B)^\transpose = \sum_{k=1}^q a_{i,k}a_{j,k} \diag(\beta) =\diag(\alpha)_{i,j}\diag(\beta)=\left\{ \begin{array}{ll}
0 & \mbox{if } i\neq j \\
\alpha_i \diag(\beta) &\mbox{if } i = j
\end{array}\right. .
\]
Therefore $CC^\transpose = \diag(\alpha\otimes \beta)$ and the lines of $A \otimes B$ are pairwise orthogonal.

\end{proof}

\subsection{Proof of \cref{prop:hadamard_pow2}}
\label{proof:hadamard_pow2}

We proceed by induction.

\begin{itemize}
    \item \textbf{Initialization:} Consider $k=1$. Using the definition of $\Walsh{}$, we have
    $$\Walsh{} \Walsh{}^\transpose = 
    \begin{pmatrix}
        1 & 1 \\
        1 & -1
    \end{pmatrix}
    \begin{pmatrix}
        1 & 1 \\
        1 & -1
    \end{pmatrix} =
        2 I_2.
    $$
    Therefore the lines of $\Walsh{}$ are pairwise orthogonal. Since $\Walsh{}$ is square and its components are in $\{-1,+1\}$, $\Walsh{}$ is a Hadamard matrix.
    \item \textbf{Heredity:} Consider $k \geq 1$ and assume that $\Walsh{k}$ is a Hadamard matrix. We want to prove that $\Walsh{k+1} = \begin{pmatrix}
        \Walsh{k} & \Walsh{k} \\ & \\
        \Walsh{k} & -\Walsh{k}
    \end{pmatrix}$ is a Hadamard matrix.

    Notice first that since $\Walsh{k}$ is a Hadamard matrix, $\Walsh{k+1}$ is square and its components are in $\{-1,+1\}$. Using the definition of the Kronecker product, we also have
    \[ \Walsh{k+1}= \begin{pmatrix}
        1 & 1 \\
        1 & -1
    \end{pmatrix} \otimes \Walsh{k} = \Walsh{}\otimes \Walsh{k}.
    \]
    Since the lines of $\Walsh{}$ are pairwise orthogonal and the lines of $\Walsh{k}$ are  pairwise orthogonal, we can apply \cref{prop:kron_reserves} and conclude that the lines of $\Walsh{k+1}$ are also pairwise orthogonal.
    
    We conclude that  $\Walsh{k+1}$ is a Hadamard matrix.
\end{itemize}
    
This concludes the proof by induction.

\subsection{Proof of \cref{prop:switch_lines}}
\label{proof:switch_lines}
Let $n \in \NN^*$ and $H \in \calH_n$. We first show that $\phi_H$ is injective. 

Let $u, u' \in \{-1,1\}^n$ such that $u \neq u'$. Let $i \in \{1, \ldots, n\}$ be such that $u_{i} \neq u'_{i}$, that is, since $u_i$ and $u'_i$ are both in $\{-1,1\}$,  $u_{i} = -u'_{i}$. Denoting, for all matrix $A$, the $i$-th row of $A$ by $A_i$, we obtain 
\[\phi_H(u)_i = u_i H_i= -u'_i H_i=- \phi_H(u')_i\] 
Since all the components of $\phi_H(u)_i$ are in $\{-1,1\}$, $\phi_H(u)_i \neq 0$ and finally $\phi_H(u)_i \neq \phi_H(u')_i$. 

As a conclusion, for any $u, u' \in \{-1,1\}^n$ such that $u \neq u'$, $\phi_H(u) \neq \phi_H(u')$. The mapping $\phi_H$ is injective.

We now show that $\phi_H(u)$ is a Hadamard matrix. Notice first that $\phi_H(u)$ is square and that all its components are $\{-1,1\}$. We still need to show that any two distinct rows of $\phi_H(u)$ are orthogonal. Let $i, j \in \{1, \ldots, n\}$ with $i \neq j$. Reminding that $\phi_H(u)_i$ is $i$-th line of $\phi_H(u)$, we have
\[\phi_H(u)_i \phi_H(u)_j^\transpose = u_{i} u_{j} H_i  H_j ^\transpose = 0.
\]
Finally, $\phi_H(u)$ is a Hadamard matrix.

This concludes the proof.

\subsection{Detailed proof of the orthogonality of the binary and sparse ternary weights}
\label{proof:ternaryOrtho}

The proof that the binary matrix defined by
\[
W(u) = \frac{1}{\sqrt{d_h}} \diag(u) \Walsh{k} ~\in\RR^{d_h \times d_h},
\]
is orthogonal when $d_h=4$ is in \cref{app-ex}.  The general proof is similar to the proof that the sparse ternary matrix defined below is orthogonal. We only detail the latter proof.

Let us prove that the sparse ternary weights defined by 
\begin{equation}\label{tegongtb}
W(u) = \frac{1}{\sqrt{2^k}} \diag(u) \bigl( I_q\otimes \Walsh{k} \bigr) ~\in\RR^{d_h \times d_h},
\end{equation}
are orthogonal for all $u\in\{-1,1\}^{d_h}$, and $d_h = q 2^k$.

To do so, we consider $u\in\{-1,1\}^{d_h}$. We first remark that the lines of $I_q$ are pairwise orthogonal. Because $\Walsh{k}$ is a Hadamard matrix, the lines of $\Walsh{k}$ are also pairwise orthogonal. Applying \cref{prop:kron_reserves}, we conclude that the lines of $I_q\otimes \Walsh{k}$ are pairwise orthogonal. Therefore, the lines of $W(u)$ are also pairwise orthogonal and the matrix $W(u)W(u)^\transpose$ is diagonal. Let us consider $i\in\lb 1 , d_h \rb$, we write $i=(m-1) 2^k + n$, where $m\in\lb 1, q\rb$ and $n\in\lb1, 2^k\rb$. Reminding that $W(u)_i$ is the $i$-th line of  $W(u)$,
 and $(\Walsh{k})_n$ is the $n$-th line of 
$\Walsh{k}$, we have
\begin{eqnarray*}
\left(W(u)W(u)^\transpose\right)_{i,i} = W(u)_i (W(u)_i)^\transpose & =&   \left( \frac{1}{\sqrt{2^k}} u_i (\Walsh{k})_n \right) \left(\frac{1}{\sqrt{2^k}} u_i (\Walsh{k})_n\right)^\transpose \\
& = &  \frac{1}{2^k} u_i^2 \sum_{j=1}^{2^k} (\Walsh{k})_{n,j} ^2 \\
& = & 1
\end{eqnarray*}
because $u_i\in\{-1,1\}$ and all the components of $\Walsh{k}$ are in $\{-1,1\}$. 

Finally, we conclude that $W(u)W(u)^\transpose = I_{d_h}$. Because the matrix $W(u)$ is square, we also have $W(u)^\transpose W(u) = I_{d_h}$ and the matrix $W(u)$ is orthogonal.

This concludes the proof the sparse ternary matrix defined by \eqref{tegongtb} is orthogonal.

\section{The Straight-through Estimator and an example}
\label{appencice_STE_ex}

We present the Straight-through Estimator (STE) in \cref{app-ste} and provide in \cref{app-ex} a detailed example of a recurrent weight matrix for the \HRNN{} defined in \cref{hrnn-sec}, specifically when $d_h = 4$.

\subsection{The Straight-through Estimator}\label{app-ste}

In this section, we discuss the Straight-through Estimator, introduced in \cite{hinton2012nnml, Bengio2013EstimatingOP, 10.5555/2969442.2969588}, a standard method for optimizing quantized neural network weights, in the context of \HRNN{} and \BHRNN.

For simplicity, we omit the optimization of $U$, $V$, $b_i$ and $b_o$ in the following description, focusing on the recurrent matrix.

We consider a matrix $W \in \RR^{d_h \times d_h}$. For the \HRNN s defined in \cref{hrnn-sec}, we use the constant orthogonal matrix  $W =  \frac{1}{\sqrt{d_h}} \Walsh{k} $, while for the \BHRNN s defined in \cref{bhrnn-sec}, we take $W = \frac{1}{\sqrt{2^k}} \bigl( I_q\otimes \Walsh{k} \bigr)$. As described in these sections, the only trainable parameter for the recurrent matrix is the binary vector $u$ which defines the recurrent weight matrix  $\diag(u) W$.

We will describe the STE method for optimizing $u\in \{-1,1\}^{d_h}$. Therefore, we only consider a learning objective $L: \RR^{d_h} \longrightarrow \RR$ and the optimization problem
\begin{equation}\label{eruinqtobn}
\argmin_{u\in \{-1,1\}^{d_h}} L(u).
\end{equation}

To optimize $u$, we define the quantization operator $H:\RR^{d_h} \longrightarrow \{-1,1\}^{d_h}$, where, for all $\tilde u\in \RR^{d_h} $, the vector $H(\tilde u)\in \{-1,1\}^{d_h}$ is given by
\[
H(\tilde u)_i = \left\{\begin{array}{ll}
+1 & \mbox{if } \tilde u_i \geq 0 \\
-1 & \mbox{otherwise}
\end{array}\right. \qquad , \mbox{for all }i\in\{1,\ldots, d_h\} .
\]
The operator $H$ is surjective, since for all $u\in \{-1,1\}^{d_h}$, $H(u) = u$. Therefore, \eqref{eruinqtobn} is equivalent to minimizing $L\circ H$ over $\RR^{d_h}$. To minimize $L\circ H$ the STE applies a modified gradient descent algorithm. The modification is described below.

The operator $H$ is piecewise constant and its gradient at $\tilde u$, denoted $\left.\frac{\partial H}{\partial \tilde u}\right|_{\tilde u}$, is either undefined or $0$. This issue is standard in quantization-aware training, which aims to minimize the objective $L(H(\tilde u))$ with respect to $\tilde u$. Backpropagating the gradient using the chain rule 
\begin{equation*}
    \label{eq:grad_loss}
    \left.\frac{\partial L\circ H}{\partial \tilde u}\right|_{\tilde u}
    = \left.\frac{\partial L}{\partial u}\right|_{H(\tilde u)} \left.\frac{\partial H}{\partial \tilde u}\right|_{\tilde u}
\end{equation*}
is either not possible or results in a null gradient in this context.

To address this issue, backpropagation through the quantization operator $H$ is performed using STE \cite{hinton2012nnml, Bengio2013EstimatingOP,10.5555/2969442.2969588}. The latter approximates the gradient using
\[ \left.\frac{\partial L\circ H}{\partial \tilde u}\right|_{\tilde u}
    \approx \left.\frac{\partial L}{\partial u}\right|_{H(\tilde u)},
\]
as if $\left.\frac{\partial H}{\partial \tilde u}\right|_{\tilde u} = I_{d_h}$.

\subsection{Example of a recurrent weight matrice}\label{app-ex}

For \HRNN, with $d_h=4$, using  \eqref{binary_W-eq} and \cref{prop:hadamard_pow2}, the recurrent weight matrices are defined for $u\in\{-1,+1\}^4$ by
\[
W(u) = \frac{1}{2} \left( \begin{array}{cccc} 
u_1 & 0 & 0 & 0 \\
0 & u_2 & 0 & 0 \\
0 & 0 & u_3 & 0 \\
0 & 0 & 0 & u_4
\end{array}\right)
\left( \begin{array}{cccc} 
1 & 1 & 1 & 1 \\
1 & -1 & 1 & -1 \\
1 & 1 & -1 & -1 \\
1 & -1 & -1 & 1
\end{array}\right).
\]
The components of $u$ are optimized using the STE, as described in \cref{app-ste}. 

As mentioned in \cref{hrnn-sec} and demonstrated in \cref{proof:ternaryOrtho} for the general case, it can be verified that, for all $u\in\{-1,+1\}^4$, $W(u)$ is orthogonal, i.e. $W(u)W(u)^\transpose = I_{d_h}$. Indeed, $W(u)W(u)^\transpose$ equals 
\begin{eqnarray*}
   & & \frac{1}{4} 
   \left( \begin{array}{cccc} 
u_1 & 0 & 0 & 0 \\
0 & u_2 & 0 & 0 \\
0 & 0 & u_3 & 0 \\
0 & 0 & 0 & u_4
\end{array}\right)
\left( \begin{array}{cccc} 
1 & 1 & 1 & 1 \\
1 & -1 & 1 & -1 \\
1 & 1 & -1 & -1 \\
1 & -1 & -1 & 1
\end{array}\right) 
\left( \begin{array}{cccc} 
1 & 1 & 1 & 1 \\
1 & -1 & 1 & -1 \\
1 & 1 & -1 & -1 \\
1 & -1 & -1 & 1
\end{array}\right) 
\left(\begin{array}{cccc} 
u_1 & 0 & 0 & 0 \\
0 & u_2 & 0 & 0 \\
0 & 0 & u_3 & 0 \\
0 & 0 & 0 & u_4
\end{array}\right) \\
&  &  = \frac{1}{4} 
   \left( \begin{array}{cccc} 
u_1 & 0 & 0 & 0 \\
0 & u_2 & 0 & 0 \\
0 & 0 & u_3 & 0 \\
0 & 0 & 0 & u_4
\end{array}\right)
  \left( \begin{array}{cccc} 
4 & 0 & 0 & 0 \\
0 & 4 & 0 & 0 \\
0 & 0 & 4 & 0 \\
0 & 0 & 0 & 4
\end{array}\right)
  \left( \begin{array}{cccc} 
u_1 & 0 & 0 & 0 \\
0 & u_2 & 0 & 0 \\
0 & 0 & u_3 & 0 \\
0 & 0 & 0 & u_4
\end{array}\right) \\
&& =  \left( \begin{array}{cccc} 
u_1^2 & 0 & 0 & 0 \\
0 & u^2_2 & 0 & 0 \\
0 & 0 & u^2_3 & 0 \\
0 & 0 & 0 & u^2_4
\end{array}\right) = I_{d_h}.
\end{eqnarray*}

\section{Experiments details}
\label{app:expe_details}

\subsection{Copy task}

We generated $512K$ samples for the training set, and $2K$ samples for both validation and test. \HRNN{}~and \BHRNN{}~were trained using the Adam optimizer \cite{2014arXiv1412.6980K}.   We used a batch size of $128$ samples. The learning rate is initialized to $1e-4$ is decayed exponentially by applying a factor $0.98$ after each epoch. $10$ epochs were used for training.

\subsection{Permuted / sequential MNIST}

We used $50K$ samples for training, $10K$ samples for validation and $10K$ samples for testing. \HRNN{}~and \BHRNN{}~were trained using the Adam optimizer \cite{2014arXiv1412.6980K}.  We used a batch size of $64$ samples. The learning rate is initialized to $1e-3$ is decayed exponentially by applying a factor $0.98$ after each epoch. $200$ epochs were used for training.

\subsection{IMDB dataset}

The IMDB dataset contains $50,000$ samples. Among these, $25,000$ samples are used for training, and the remaining $25,000$ are equally divided between validation and testing.

We used a batch size of $100$ samples. The learning rate is initialized to $5e-4$ is decayed exponentially by applying a factor $0.99$ after each epoch. $30$ epochs were used for training.

\section{Complements on the experiments}\label{app_comp_exp}

\subsection{Training time}\label{app_train_time}

For each benchmark of \cref{sec:expe} and \cref{glue_b_sec}, we provide the training time for the \HRNN{} in \cref{tab:train_time}. As the training times for the \BHRNN{} are comparable, they are not explicitly reported here. Experiments where done on a NVIDIA GeForce RTX 3080 GPU.
For comparison, FastGRNN \cite{kusupati2018fastgrnn} requires 16.97 hours of training time on the MNIST dataset. The training times for the HAR-2 and DSA-19 datasets, along with a comparison to those of FastGRNN, are presented in \cref{tab:iot_benchmarks}.

\begin{table}[ht]
    \centering
    \caption{Training times, in hours}
    \label{tab:train_time}
    \begin{tabular}{lccccc}
        \toprule
          dataset & Copy task & pMNIST / sMNIST & IMDB & SST-2 & QQP \\
         \midrule
         \# epochs & 10 & 200 & 30 & 60 & 50 \\
         \midrule
         training time (hr) & 13 & 13 & 0.5&0.36 &7.5 \\
         \bottomrule
    \end{tabular}
\end{table}

\subsection{Training stability}\label{app_train_stability}

In this section, we examine the variability of the trained \HRNN{} model based on different random initializations and randomness in the stochastic algorithm (Adam). The experiments were conducted using the IMDB dataset, with similar trends observed across the other benchmarks, supporting the consistency of these conclusions.

Each trainable matrix or vector is initialized using the Glorot initialization method \cite{glorot2010understanding}, which relies on a random seed. For a fixed combination of bitwidth $p$ (applied to the matrices $U$ and $V$) and initial learning rate, we train 5 different models using 5 different random seeds. The average performance and standard deviation of the models are reported in \cref{tab:train_variability}.

\begin{table}[ht]
    \centering
    \caption{\HRNN's training variability over 5 experiments on the IMDB dataset, for each combination of bitwidth $p$, for the matrices $U$ and $V$, and initial learning rate.}
    \label{tab:train_variability}
    \begin{tabular}{cccc}
        \toprule
         U \& V & Initial & Average & Standard \\
         bitwidth & learning rate & performance & deviation \\
         \midrule
          \midrule
         \multirow{5}{*}{2} & 1.e-2 & 83.28 & 1.00 \\
           & 5.e-3 & 83.83 & 1.36 \\
           & 1.e-3 & 80.84 & 5.09 \\
           & 5.e-4 & 80.75 & 8.05 \\
           & 1.e-4 & 74.61 & 2.69 \\
          \midrule
          \multirow{5}{*}{4} & 1.e-2 & 83.76 & 0.70 \\
           & 5.e-3 & 83.73 & 0.40 \\
           & 1.e-3 & 85.83 & 0.40 \\
           & 5.e-4 & 85.85 & 0.39 \\
           & 1.e-4 & 83.05 & 0.49 \\
         \midrule
          \multirow{5}{*}{6} & 1.e-2 & 83.18 & 1.88 \\
           & 5.e-3 & 83.61 & 0.53 \\
               & 1.e-3 & 85.73 & 0.63 \\
           & 5.e-4 & 86.09 & 0.35 \\
           & 1.e-4 & 83.47 & 0.50 \\
         \bottomrule
    \end{tabular}
\end{table}

As expected, the initial learning rate significantly influences the performance of the trained model. It appears that performance improves as the initial learning rate increases when $p$ decreases. In all cases, the optimal learning rate spans approximately one order of magnitude, suggesting that the search for the optimal initial learning rate can be conducted on an exponential scale.
 
A bitwidth of $p=2$ results in relatively high variability in the performance of the trained model, whereas bitwidths of $p=4$ and $p=6$ exhibit negligible variance. As anticipated, smaller bitwidths on U and V matrices make the training process more sensitive to the random initialization of the parameters.

\subsection{Visualization of Model Comparisons}\label{app_plots_1}

We provide two visual comparisons of \HRNN{}, \BHRNN{}, QORNN, LSTM and ORNN based on different criteria, using data from \cref{tab:full_results} and \cref{tab:block_full_results}.
\begin{itemize}
    \item In \cref{fig:perf_and_size}, we plot each model in the (size, performance) plane, where size is measured in kilobytes (kB) and performance corresponds to the accuracy of the trained model on pMNIST. The most effective models are located in the upper-left corner of \cref{fig:perf_and_size}.
    The plot highlights the efficiency of \HRNN{}. 
    \item In \cref{fig:perf_and_complexity}, we plot the \HRNN{} and \BHRNN{} models for different values of $q$ in the (complexity, performance) plane. Complexity is measured by the number of fixed-point precision additions (see \cref{tab:computation_complexity}), while performance corresponds to the cross-entropy of the trained model on the Copy task. The most effective models appear in the lower-left corner of \cref{fig:perf_and_complexity}.

This visualization highlights that, for the Copy task, $q$ can be tuned to balance the trade-off between complexity and performance.
\end{itemize}

\begin{figure}[t]

    \centering
    
    \includegraphics[width=0.75\linewidth]{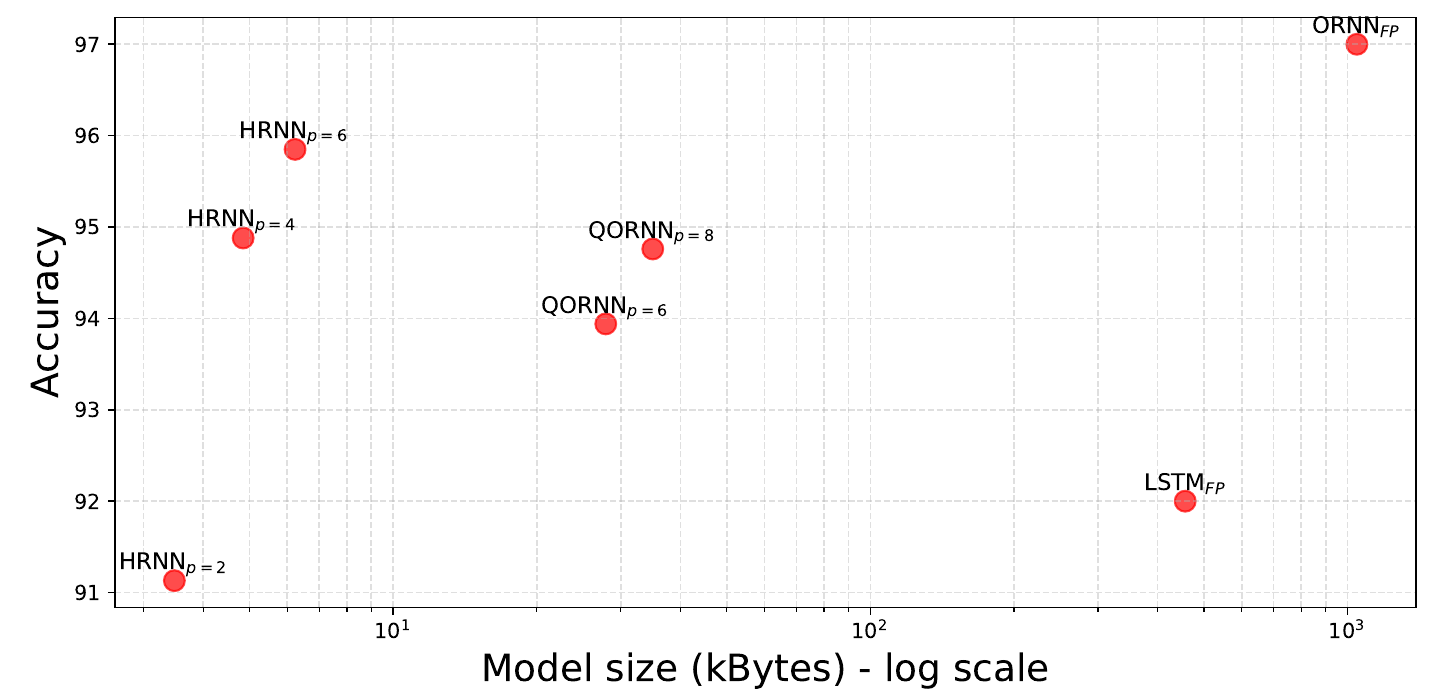}
    \caption{\label{fig:perf_and_size} Position of each model in the (size, performance) plane, on pMNIST. The most effective models are located in the upper-left corner of \cref{fig:perf_and_size}. The parameter $p$ corresponds to the bitwidth of the quantized matrices $U$ and $V$, as introduced in \cref{quant_U_V-sec}. $\textit{FP}$ stands for \textit{full-precision}.}
\end{figure}

\begin{figure}[t]

    \centering
    \hspace{-0.5cm}
    \includegraphics[width=0.77\linewidth]{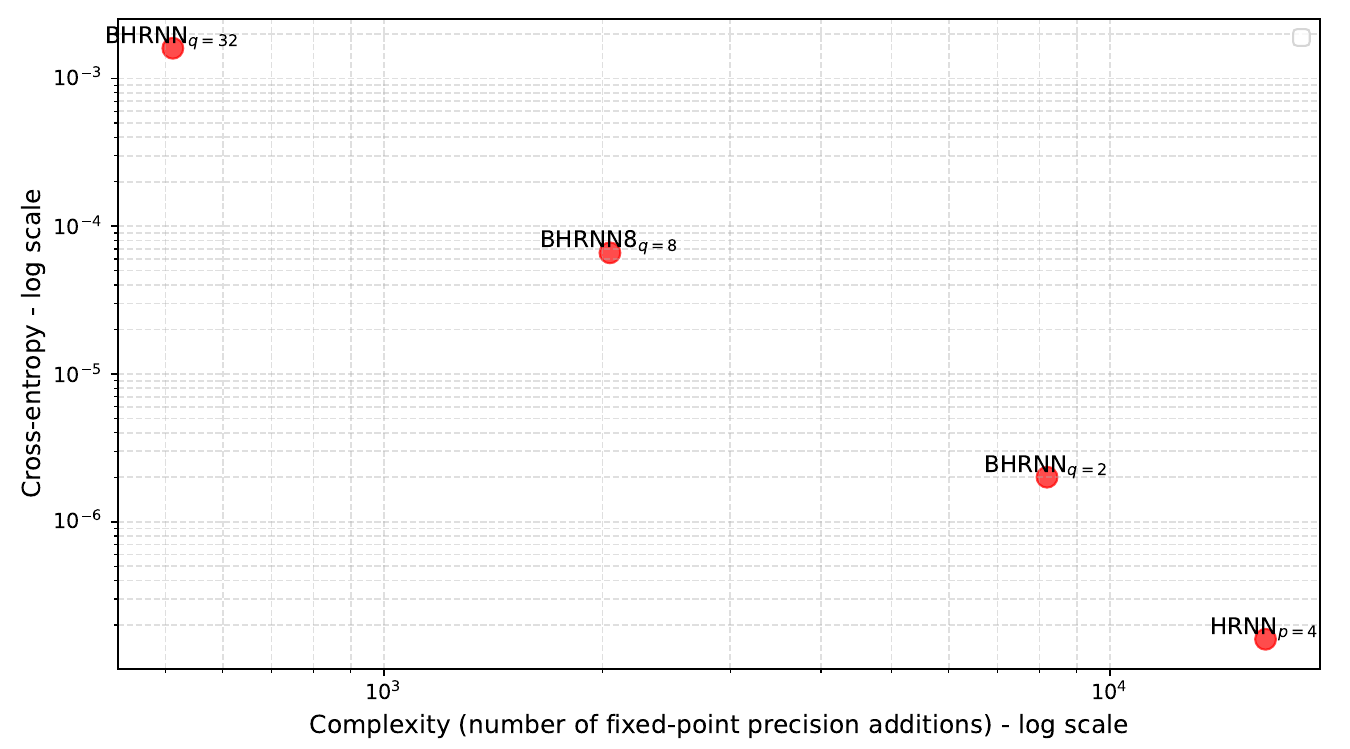}
    \caption{\label{fig:perf_and_complexity} 
    Position of \HRNN{} and \BHRNN{} models for different values of $q$ in the (complexity, performance) plane. The most effective models appear in the lower-left corner of \cref{fig:perf_and_complexity}. The bitwidth of the quantized matrices $U$ and $V$ is set to $p=4$.}
\end{figure}

\section{More benchmarks}\label{other-benchmarks}

In \cref{glue_b_sec}, we provide a comparison of \HRNN{} and quantized versions of BERT on the SST-2 and QQP benchmarks from GLUE \citep{wang2018glue}. In \cref{iot_b_sec}, we provide results on IoT task benchmarks: HAR-2 and DSA-19 as described in \citet{kusupati2018fastgrnn}.

\subsection{GLUE benchmark and comparison with transformers}\label{glue_b_sec}
GLUE is a well-established benchmark for NLP systems, comprising various sentence-level language understanding tasks. The most effective solutions for these tasks are large models trained on extensive datasets, typically large pretrained language models (LLMs) followed by fine-tuning with a task-specific head.

Although \HRNN{} and \BHRNN{} are not designed to operate within a large language model (LLM) framework, and no ORNN results are reported on the benchmark leaderboard, we evaluate the performance of \HRNN{} and \BHRNN{} trained directly on the training data of a single task. This allows us to position these networks relative to transformer-based models.

We selected two tasks: SST-2, a sentiment analysis task closely related to the objective of the IMDB dataset, and QQP (Quora Question Pairs), which evaluates whether two questions are semantically equivalent. For both tasks, as is standard practice, we used a tokenizer to map sentences into sequences of indices, padded to a fixed length of 128 timesteps. For sentence pairs, as recommended (and done in the reference code), we concatenated both sentences with two separator tokens in between.

Each task was trained using the respective training dataset without data augmentation. The network achieving the best score on the validation dataset was retained, and its predictions were submitted to the GLUE Benchmark website. The performance metrics provided by the website are reported in \cref{tab:glue_benchmark}.

Since \HRNN{} and \BHRNN{} are trained exclusively on the training data for a single task, larger networks tend to overfit. Notably, we were unable to train and generalize effectively on validation data using pre-trained embeddings, such as the first layer of BERT (with a size of 768). Instead, we used a trainable embedding layer of size 256.

For the SST-2 task, the best results were achieved with an \HRNN{} network having a hidden size of 128, resulting in a total model size of approximately 8 kB and an accuracy of $81.9\%$. In comparison, the smallest BiBERT model, distilled from a BERT pre-trained on a large dataset and with a network size 550 times larger, achieves $85.4\%$.

For the QQP task, using a \BHRNN{} network with a hidden size of 512 and a block size of 128, we obtained an accuracy of $82.1\%$. This result surpasses the smallest BiBERT, despite our model being more than 130 times smaller.





\begin{table}[t]
    \small
    \centering
    \caption{Performance comparison of \HRNN{}, \BHRNN{}, and state-of-the-art models on two GLUE benchmark tasks: SST-2 and QQP. The last column indicates the model size in kilobytes (kB).
    }
    \label{tab:glue_benchmark}
    \begin{tabular}{lcc}
        \toprule
         Model &  Performance & Size\\
         &  & kBytes\\
         \midrule
         \midrule 
          \multicolumn{3}{c}{{\bf SST-2 task} ($T=128$, $d_{out}=1$, accuracy)} \\
          \midrule
          \midrule
        Q8BERT \citep{zafrir2019q8bert} &  92.24 & 110 000 \\
        Q-BERT \citep{shen2020q}&  92.08 & 48 100 \\
        TernaryBERT \citep{zhang2020ternarybert} &  92.8 & 18 000 \\
        MobileBERT \citep{sun-etal-2020-mobilebert} &  91.6 & 15 100 \\
        BinaryBERT \citep{bai2021binarybert} &  92.3 & 13 400 \\
        BiT \citep{liu2022bit} &  89.9 & 13 400 \\
        Bi{BERT} (\cite{qin2022bibert}) & 88.7 & 13 400\\
        Bi{BERT} \citep{qin2022bibert} & 87.9 & 6 800\\
        Bi{BERT} \citep{qin2022bibert} & 85.4 & 4 400\\
        \HRNN {} (ours, $d_i=256$, $d_h=128$) &  81.9 & 8 \\
         \midrule
         \midrule 
          \multicolumn{3}{c}{{\bf QQP task} ($T=128$,  $d_{out}=1$, accuracy)} \\
          \midrule
          \midrule
        Q8BERT \citep{zafrir2019q8bert} &  87.96 & 110 000 \\
    TernaryBERT \citep{zhang2020ternarybert} &  88.8 & 18 000 \\
        BinaryBERT \citep{bai2021binarybert} &  88.9 & 13 400 \\
        BiT \citep{liu2022bit} &  85.4 & 13 400 \\
        Bi{BERT} \citep{qin2022bibert} & 84.8 & 13 400\\ 
        Bi{BERT} \citep{qin2022bibert} & 83.3& 6 800\\
        Bi{BERT} \citep{qin2022bibert} & 78.2 & 4 400\\
        \BHRNN{} (ours, $d_i=256$, $d_h=512$, $q=128$) &  82.1 & 33 \\
         \bottomrule
    \end{tabular}
\end{table}

\subsection{Internet of Things (IoT) benchmarks}\label{iot_b_sec}

\HRNN{} and \BHRNN{} are particularly well-suited for modeling long-term dependencies such as the Copy task, thanks to their orthogonal recurrent weight matrix. To provide a comprehensive evaluation, we also tested these models on two short-term memory IoT benchmarks (< 150 timesteps), previously addressed by FastGRNN \cite{kusupati2018fastgrnn}. The results of these experiments are summarized in \cref{tab:iot_benchmarks}. For both benchmarks, the hyperparameter $q$ in \BHRNN{} (detailed in \cref{bhrnn-sec}) was optimized using grid search.

\paragraph{Human Activity Recognition (HAR-2)}

The Human Activity Recognition (HAR) dataset consists of human motion data captured using an accelerometer and gyroscope embedded in a Samsung Galaxy S II smartphone. Data is recorded at a fixed frequency of $50$ Hz, with each sequence spanning $128$ timesteps. Each sequence is labeled with one of the following six activity classes: {\it Sitting}, {\it Laying}, {\it Walking\_Upstairs}, {\it Standing}, {\it Walking}, and {\it Walking\_Downstairs}. Following the approach in \cite{kusupati2018fastgrnn}, these six classes are grouped into two categories: \{Sitting, Laying, Walking\_Upstairs\} and \{Standing, Walking, Walking\_Downstairs\}. The objective is to classify each sequence into the correct category. Both the training and test sets have been preprocessed to ensure zero mean and unit variance, ensuring consistency across the data.

\begin{table}[t]
    \small
    \centering
    \caption{Comparison of performances of the \HRNN{}, the \BHRNN{} and FastGRNN \cite{kusupati2018fastgrnn} on two IoT benchmarks (HAR-2 and DSA-19). Last column reports the model size in kBytes.}
    \label{tab:iot_benchmarks}
    \begin{tabular}{lcccccc}
        \toprule
         Model & $d_h$ & train & W & U \& V  & performance & size\\
         & & time (min) & bitwidth & bitwidth & & kBytes\\
         \midrule
         \midrule 
          \multicolumn{7}{c}{{\bf HAR-2} ($T=128$, $d_{in}=9$, $d_{out}=1$, accuracy)} \\
          \midrule
          \midrule
        FastGRNN (\cite{kusupati2018fastgrnn}) & 80 & 4.8 & 8 & 8 & 95.59 & 3.00 \\
       \midrule
         \multirow{6}{*}{\BHRNN{} ($q=2$)} &  \multirow{3}{*}{1024} & 12.1 & \multirow{3}{*}{1} & 4 & 94.81 & 9.13 \\
          & & 12.5 & & 6 & 95.76 & 11.63 \\
          & & 10.2 & & 8 & 95.36 & 14.13 \\
       \cline{2-7}
         & \multirow{3}{*}{512} & 12.0 & \multirow{3}{*}{1} & 4 & 94.87 & 4.57 \\
          & & 12.1 & & 6 & 94.56 & 5.82 \\
          & & 12.2 & & 8 & 95.64 & 7.07 \\
        \midrule
          \multirow{3}{*}{\HRNN{}} &  \multirow{3}{*}{512} & 4.4 & \multirow{3}{*}{1} & 4 & 83.40 & 4.57 \\
          & & 5.2 & & 6 & 84.45 & 5.82 \\
          & & 4.2 & & 8 & 84.11 & 7.07 \\
          \midrule
          \midrule
         \multicolumn{7}{c}{{\bf DSA-19} ($T=125$, $d_{in}=45$, $d_{out}=19$, accuracy)} \\
          \midrule
          \midrule
        FastGRNN (\cite{kusupati2018fastgrnn}) & 80 & 2.2 & 8 & 8 & 85.67 & 22.00 \\
       \midrule
         \multirow{6}{*}{\BHRNN{} ($q=16$)} &  \multirow{3}{*}{256} & 5.1 & \multirow{3}{*}{1} & 4 & 76.51 & 9.11 \\
          & & 6.1 & & 6 & 78.30 & 13.11 \\
          & & 3.0 & & 8 & 75.5 & 17.11 \\
       \cline{2-7}
         & \multirow{3}{*}{512} & 5.2 & \multirow{3}{*}{1} & 4 & 81.32 & 18.14 \\
          & & 6.2 & & 6 & 85.26 & 26.14 \\
          & & 3.2 & & 8 & 81.86 & 34.14 \\
        \midrule        
          \multirow{3}{*}{\HRNN{}} &  \multirow{3}{*}{512} & 5.2 & \multirow{3}{*}{1} & 4 & 77.93 & 18.14 \\
          & & 3.4 & & 6 & 76.01 & 26.14 \\
          & & 3.3 & & 8 & 79.12 & 34.14 \\
         \bottomrule
    \end{tabular}
\end{table}


\paragraph{Daily and Sports Activity (DSA-19)}
This dataset comprises motion sensor data collected from accelerometers, gyroscopes, and magnetometers, capturing a range of daily and sports-related human activities. Measurements are sampled at a fixed frequency of 25 Hz and segmented into sequences of $125$ timesteps. A total of $19$ distinct activities were performed by the participants, with each sequence labeled according to its corresponding activity type. The objective is to accurately predict the activity associated with each sequence. Both the training and test sets have been preprocessed to achieve zero mean and unit variance, ensuring consistency across the data.

Our results show that an optimized \BHRNN{} delivers performance comparable to FastGRNN while maintaining a similarly compact model size. For instance, on the DSA-19 benchmark, the best \BHRNN{} model achieves performance within approximately $0.4\%$ of FastGRNN, with a comparable size. Notably, on the HAR-2 benchmark, the \BHRNN{} outperforms FastGRNN in performance with only a marginal size increase of a few kilobytes. These findings suggest that \HRNN{} and \BHRNN{} are also well-suited for tasks where long-term memory capabilities are not a critical requirement.

\section{Linear  versus ReLU recurrent units} \label{memorization-appendix}

In this appendix, we argue that \lq linear-ORNNs\rq, as considered in \cref{ornns}, are better suited than \lq ReLU-ORNNs\rq~for tasks that require strong memorization. 
An ablation study supporting this fact is in \cref{sec:ablation}.
The theoretical arguments are given in \cref{sec:memorization_theoretical} and the experimental arguments are in \cref{sec:app_lin_vs_relu_experiments}. 

Throughout the appendix, we consider $h_0=0$, a matrix $U\in\RR^{d_h\times d_{in}}$ and a matrix $V\in \RR^{d_{out}\times d_h}$, $b_i\in\RR^{d_h}$, $b_o\in\RR^{d_{out}}$, an orthogonal recurrent weight matrix $W \in \RR^{d_h \times d_h}$ and the ReLU activation function $\sigma$. We also consider inputs $x_1, \ldots, x_T \in \RR^{d_{in}}$.

\begin{itemize}
\item We call {\it linear-ORNNs}, those computing the sequence of hidden states $h^{lin}_1, \ldots, h^{lin}_T \in \RR^{d_h}$ according to
\begin{equation}
    \label{eq:linear-ORNN_hidden-app}
    h^{lin}_{t}  =  W h^{lin}_{t-1} + U x_{t} + b_i.
\end{equation}
The output is the vector $V\sigma(h^{lin}_T)+b_o\in\RR^{d_{out}}$. 
\item We call {\it ReLU-ORNNs}, those computing a sequence of hidden states $h^{ReLU}_1, \ldots, h^{ReLU}_T \in \RR^{d_h}$ according to
\begin{equation}
    \label{eq:RELU_ORNN_hidden-app}
    h^{ReLU}_{t}  =  \sigma(W h^{ReLU}_{t-1} + U x_{t} + b_i).
\end{equation}
The output is the vector $Vh^{ReLU}_T+b_o\in\RR^{d_{out}}$. 
\end{itemize}
Notice that, for simplicity, we only consider many-to-one RNNs.

\subsection{Ablation study}

\label{sec:ablation}

We compare the performance of \HRNN{} with linear recurrent units, as described in the article, \eqref{eq:ORNN_hidden} and \eqref{eq:linear-ORNN_hidden-app}, with those of a \HRNN{} with ReLU recurrent units, in \eqref{eq:RELU_ORNN_hidden-app}, which we refer to as \HRNN{}-ReLU. The results are summarized in \cref{tab:ablation}. It can be observed that the \HRNN{}-ReLU fails to learn the copy task, and obtains poor results on the other benchmarks, as compared to the \HRNN{}. This is because \HRNN{} better memorize than \HRNN{}-ReLU. This phenomenon is analyzed in \cref{sec:memorization}.

\begin{table}[h]
    \caption{Ablation study: performance comparison between \HRNN s with linear recurrent units and with classical ReLU activation function recurrent unit. All results are presented with the size $d_h$ reported in \cref{tab:full_results} and 4-bit quantization for $U$ and $V$ matrices. BL (baseline) means that the model failed to learn.}
    \label{tab:ablation}
    \centering
    \begin{tabular}{lcccc}
        \toprule
        Model & Copy-task & pMNIST & sMNIST & IMDB\\
         \midrule
         \HRNN{}  & 1.6e-7  & 94.88 & 96.63  & 87.43 \\
         \HRNN{}-ReLU  & BL  & 86.82 & 65.06 & 72.68 \\
         \bottomrule
    \end{tabular}
\end{table}

\subsection{Better memorization of linear recurrent units}  \label{sec:memorization}
\subsubsection{Theoretical arguments}
\label{sec:memorization_theoretical}

In this section, we analyze first for linear-ORNNs, then for ReLU-ORNNs, the impact on the result of a modification of the input $x_t$, for $t\in\lb1,T\rb$.
To do so, we first compute $ \frac{\partial h_T^{lin}}{\partial x_t}$ and then compute $ \frac{\partial h_T^{ReLU}}{\partial x_t}$ and compare the behavior of the two quantities.

\paragraph{The linear-ORNN case}
We consider  an input sequence $\mathbf x = \left(x_1, \ldots, x_T\right) \in \RR^{d_{in} \times T}$, $t\in\lb1,T\rb$ and $\mathbf z =\left(0, \ldots,0,z_t,0,\ldots ,0\right) \in \RR^{d_{in} \times T}$. We denote $h^{lin}_T$ the last hidden state computed at $\mathbf x$ and $g^{lin}_T$ the one computed at $\mathbf x +\mathbf z $. We can show by induction that 
\begin{equation}\label{simpleetouh}
    g^{lin}_T = h^{lin}_T + W^{T-t} U z_t.
\end{equation}

The impact of the perturbation $\mathbf z$ is independent of $\mathbf x$. The timestep $t$ influences $W^{T-t}$ but has no impact on the norm of the variation since, as $W$ is orthogonal,
\[
\|W^{T-t} U z_t\| = \|U z_t\|.
\]
 
  In conclusion, a linear-ORNN retains information regardless of when the information occurs in the input sequence.

\paragraph{The ReLU-ORNN case}

Again, we consider an input sequence $\mathbf x = \left(x_1, \ldots, x_T\right) \in \RR^{d_{in} \times T}$, $t\in\lb1,T\rb$ and $\mathbf z =\left(0, \ldots,0,z_t,0,\ldots ,0\right) \in \RR^{d_{in} \times T}$. We denote $h^{ReLU}_T$ the last hidden state computed at $\mathbf x$ and $g^{ReLU}_T$ the one computed at $\mathbf x +\mathbf z $.
Due to the non-linear activation function, it is not possible to derive a straightforward formula analogous to \eqref{simpleetouh}. To analyze the behavior of $g^{ReLU}_T$, we use its first-order Taylor expansion and analyze its Jacobian.


For $t \in \{1, \ldots, T\}$, the Jacobian\footnote{For clarity, we use partial derivative notation for Jacobian matrices when applying the chain rule. It is important to note that while ReLU is not differentiable at 0 in the traditional sense, the calculations remain rigorous \citep{bolte2021conservative}.} of $h_T^{ReLU}$ with respect to $x_t$ writes 
\begin{align*}
    \label{eq:h_t_rel_derivative_2}
    \frac{\partial h_T^{ReLU}}{\partial x_{t}}
    &= \frac{\partial h^{ReLU}_T}{\partial h^{ReLU}_{T-1}} \cdots \frac{\partial h^{ReLU}_{t+2}}{\partial h^{ReLU}_{t+1}} \frac{\partial h^{ReLU}_{t+1}}{\partial x_{t}} \\
    &=  D_T W \cdots D_{t+1}W D_{t} U \in \RR^{d_h\times d_{in}},
\end{align*}
where, for all $s$, $D_{s} = \diag\left(\sigma' \left( W h_{s-1} + U x_{s} + b_i\right)\right) \in\{0,1\}^{d_h\times d_h}$ is diagonal. 

The Euclidean norm of a vector is preserved when multiplied by $W$ but most often decreases when multiplied  by a matrix $D_s$, since the latter often contains zeros on its diagonal. As a result, we expect the influence of variations of $x_t$, for $t$ small, to diminish or become negligible. Considering $x_t$ as a variation of $0$, we see that the first inputs may have less influence on the result than the later ones.

\subsubsection{Experimental results}
\label{sec:app_lin_vs_relu_experiments}

We conducted an experiment to empirically demonstrate that linear-ORNNs have better memory retention than ReLU-ORNNs.

\paragraph{Setup}
We construct a HadamRNN, a linear-ORNN, and consider the ReLU-ORNN  obtained using the same weight matrices. They differ solely in the position of activation function (see \eqref{eq:linear-ORNN_hidden-app} and \eqref{eq:RELU_ORNN_hidden-app}). For given inputs $x_1, \ldots, x_T \in \RR^{d_{in}}$, we denote the hidden state at time $t$ as $h_t^{lin}$ and $h_t^{ReLU}$, for $t=1, \ldots, T$.

Our aim is to empirically observe that, regardless of $t$, a perturbation in the $t$-th input has a consistent impact on the final hidden state $h_T^{lin}$. In contrast, the impact of the same perturbation on the final hidden state $h_T^{ReLU}$ of the ReLU-ORNN diminishes as $t$ decreases. As the time difference increases, the model gradually loses information .

To this end, we generate a $2$-dimensional time series $(x_0, \ldots, x_T) \in \RR^{2 \times T}$, with $T = 200$ timesteps. We set $x_0 = (0,0)$ and for $t = 1, \ldots, T$, we sample $x_t$ according to the Gaussian distribution $\mathcal{N} \left(x_{t-1}, I_2\right)$. 

We set $d_h = 128 = 2^7$, and we let the Hadamard recurrent weight matrix be $ W = \frac{1}{\sqrt{d_h}} \Walsh{7}$, as in \eqref{binary_W-eq}. The components of the input matrix $U \in \RR^{128 \times 2}$ are independently sampled from the normal distribution $\mathcal{N}(0,1)$. Since our primary focus is on how input variations affect the hidden state, we omit the output layer in our model.








\paragraph{The experiment}

To compute ${\mathbf e}^{lin}$ and ${\mathbf e}^{ReLU} \in\RR^T$,  we apply a perturbation to $x_t$ along the first axis,  for all $t = 1, \ldots, T$, and compute 
\begin{equation} \label{lese}
    \left\{
        \begin{array}{ll}
          {\mathbf e}^{lin}_t =   \|h_T^{lin} (x_1, \ldots, x_t, \ldots, x_T) - h_T^{lin} (x_1, \ldots, x_t + (1,0), \ldots, x_T)\|_2, \\[5pt]
          {\mathbf e}^{ReLU}_t =   \|h_T^{ReLU} (x_1, \ldots, x_t, \ldots, x_T) - h_T^{ReLU} (x_1, \ldots, x_t + (1,0), \ldots, x_T)\|_2.
        \end{array}
    \right.
\end{equation}
To compute ${\mathbf f}^{lin}$ and ${\mathbf f}^{ReLU} \in\RR^T$, we apply the same perturbation along the second axis, and compute
\begin{equation} \label{lesf}
    \left\{
        \begin{array}{ll}
            {\mathbf f}^{lin}_t = \|h_T^{lin} (x_1, \ldots, x_t, \ldots, x_T) - h_T^{lin} (x_1, \ldots, x_t + (0,1), \ldots, x_T)\|_2, \\[5pt]
            {\mathbf f}^{ReLU}_t = \|h_T^{ReLU} (x_1, \ldots, x_t, \ldots, x_T) - h_T^{ReLU} (x_1, \ldots, x_t + (0,1), \ldots, x_T)\|_2.
        \end{array}
    \right.
\end{equation}

\begin{figure}[t]

    \centering
    
    \includegraphics[width=1\linewidth]{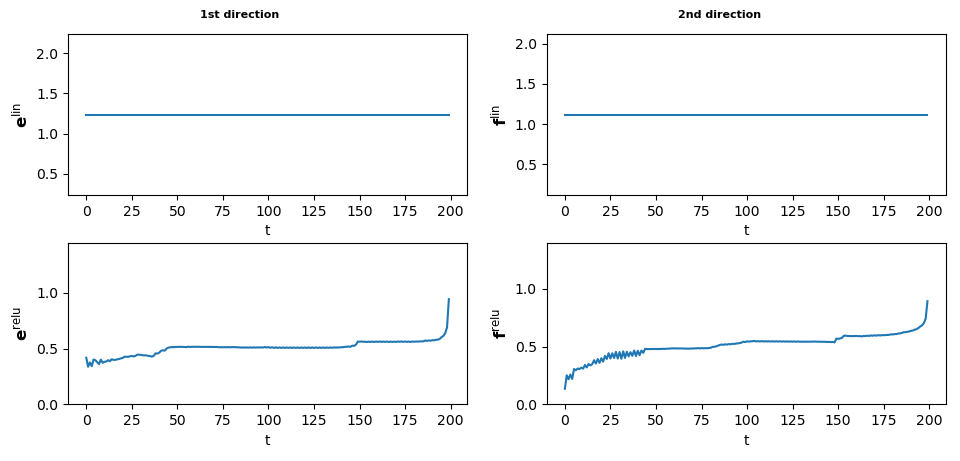}
    

    \caption{\label{fig:expe_linear_vs_relu} Top: ${\mathbf e}^{lin}$ and ${\mathbf f}^{lin}$,  bottom: ${\mathbf e}^{ReLU}$ and ${\mathbf f}^{ReLU}$, see \eqref{lese} and \eqref{lesf}.}
\end{figure}


We plot the corresponding curves in \cref{fig:expe_linear_vs_relu}. As predicted by the analysis of \cref{sec:memorization_theoretical}, for the linear-ORNN, the magnitude of the difference in hidden states remains constant, whether the perturbation is applied along the first or second axis. Conversely, in the ReLU-ORNN case, the norm increases, possibly approaching $0$. This illustrates the fact that variations in $x_t$ have a diminishing impact on the hidden state $h_T$ as $T-t$ increases.

\paragraph{Conclusion} Linear-ORNNs have better memorization property than ReLU-ORNNs, and are able to retain information over arbitrarily long delays. This information may be lost after a certain delay with a ReLU-ORNN.

\section{Switching columns signs is not useful}\label{ligne-sec}
\subsection{Switching columns signs does not provide more expressiveness}\label{colonne-appendix}

In this section, we first provide a first result stating that modifying the signs of the lines of a Hadamard matrix (as is done by the multiplication by $\diag(u)$, in \cref{prop:switch_lines} and the definition of the $W(u)$s in \eqref{binary_W-eq} and \eqref{ternary_W-eq}) defines the same hidden states as switching the signs of the columns. We then state that changing the signs of both rows and columns still defines the same hidden states and is therefore unnecessary.

To do so, we define,  for $n$ and $d_h\in\NN^*$ and a given matrix $H\in \RR^{d_h\times n}$, the operator $\phi_H$  by:
\begin{align*}
    \phi_H : \{-1,1\}^{d_h} &\longrightarrow \RR^{d_h\times n} \\
    u &\longmapsto \diag(u) H,
\end{align*}
and,  for   $H\in \RR^{ n\times d_h}$, the operator $\Gamma_H$ by
\begin{align*}
    \Gamma_H : \{-1,1\}^{d_h} &\longrightarrow \RR^{n\times d_h} \\
    u &\longmapsto  H\diag(u).
\end{align*}

\begin{prop}\label{row=col-prop}
    Let $H\in\RR^{d_h\times d_h}$, $U\in\RR^{d_h\times d_{in}}$, $V\in\RR^{d_{out}\times d_h}$ and $b_i\in\RR^{d_h}$. 
    
    For all $u\in\{-1,1\}^{d_h}$, the RNNs states $h_t$ and $h'_t$ defined for all $x_1$,\ldots, $x_T\in\RR^{d_{in}}$ by
    \[\left\{\begin{array}{ll}
    h_{t}  = \phi_H(u)  h_{t-1} + U x_{t} + b_i &  \mbox{ for all } t\in\lb1,T\rb \\[5pt]
    h'_{t}  = \Gamma_H(u)  h'_{t-1} + \phi_U(u) x_{t} + \phi_{b_i}(u) & \mbox{for all } t\in\lb1,T\rb 
    \end{array}
    \right.
    \]
    with $h_0=h'_0=0$, differ only by their sign:\[h'_t=\diag(u)h_t\mbox{, for all }t\in\lb1,T\rb.
    \]
\end{prop}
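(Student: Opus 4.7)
The plan is to prove the identity $h'_t = \diag(u) h_t$ by induction on $t$, with the base case being trivial since $h_0 = h'_0 = 0$. The whole argument hinges on the elementary fact that $\diag(u)^2 = I_{d_h}$ whenever $u \in \{-1,1\}^{d_h}$, i.e.\ $\diag(u)$ is an involution.

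For the inductive step, I would assume $h'_{t-1} = \diag(u) h_{t-1}$ and expand the definition
\[
h'_t = H \diag(u) h'_{t-1} + \diag(u) U x_t + \diag(u) b_i.
\]
Substituting the induction hypothesis into the recurrent term, the two copies of $\diag(u)$ collapse via $\diag(u)^2 = I_{d_h}$, yielding $H h_{t-1}$. Factoring $\diag(u)$ out of the remaining input and bias terms gives
\[
h'_t = H h_{t-1} + \diag(u)\bigl(U x_t + b_i\bigr).
\]
On the other hand, multiplying the defining recursion of $h_t$ on the left by $\diag(u)$ produces exactly the same right-hand side, again using $\diag(u)^2 = I_{d_h}$, so $h'_t = \diag(u) h_t$, closing the induction.

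There is no real obstacle: the statement is essentially a bookkeeping identity expressing that multiplying the columns of $H$ by signs can be absorbed into a change of basis of the hidden state, with the compensating sign flip on the input map $U$ and bias $b_i$. The only subtlety worth pointing out explicitly is that this argument breaks if $u \notin \{-1,1\}^{d_h}$, since $\diag(u)^2 = I_{d_h}$ is what makes the substitution work in both directions; this is precisely why the result is meaningful for binary sign vectors only. Since the output of the network depends on $h_T$ through a linear map followed by the activation applied elementwise, this identity further implies that switching column signs can be compensated by switching the corresponding signs of $U$, $b_i$, and (for the readout) $V$, so it does not enlarge the family of functions expressible by the model, which is the practical content of the remark preceding the proposition.
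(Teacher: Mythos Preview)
Your proof is correct and follows essentially the same approach as the paper: induction on $t$ with the trivial base case $h_0=h'_0=0$, and an inductive step that expands the recursion for $h'_t$, substitutes the hypothesis $h'_{t-1}=\diag(u)h_{t-1}$, uses $\diag(u)^2=I_{d_h}$, and then recognizes the result as $\diag(u)h_t$. The only cosmetic difference is that the paper factors $\diag(u)$ out of all three terms at once rather than computing $\diag(u)h_t$ separately and matching.
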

\begin{proof}
Consider $H\in\RR^{d_h\times d_h}$, $U\in\RR^{d_h\times d_{in}}$, $V\in\RR^{d_{out}\times d_h}$ and $b_i\in\RR^{d_h}$. Let $x_1$,\ldots, $x_T\in\RR^{d_{in}}$.

Let us first prove that for all $u\in\{-1,1\}^{d_h}$ and all $t\in\lb1,T\rb $, $h'_t = \diag(u) h_t$. Let us consider $u$ and prove that the result holds by induction.
\begin{itemize}
    \item \textbf{Initialization:} We have $h'_0 = 0 = \diag(u) 0 = \diag(u) h_0$.
    \item \textbf{Heredity:} Assume $t\geq 1$ is such that $h'_{t-1}=\diag(u) h_{t-1}$. 
    
    Using the definition of $h'_t$, the fact that $\diag(u) \diag(u) = I_{d_h}$ and the definition of $h_t$, we have
\begin{eqnarray*}
h'_t &= &\Gamma_H(u) h'_{t-1} + \phi_U(u) x_t + \Phi_{b_i}(u) \\
&= & H \diag(u) \diag(u) h_{t-1} + \diag(u) U x_t + \diag(u) {b_i} \\
&= &\diag(u) \left(\diag(u) H h_{t-1} + U x_t + {b_i} \right) \\
& = & \diag(u) h_t
\end{eqnarray*}


\end{itemize}
This concludes the proof of the proposition.
\end{proof}
This proposition ensures that, when optimizing $U$ and ${b_i}$ over a set that is invariant under multiplication by $\diag(u)$ (whether on the left or right), choosing left or right yields equivalent hidden states.

Applying \cref{row=col-prop} to $H' = \diag(u')H$, for a fixed $u' \in \{-1,1\}^{d_h}$, shows that changing the signs of both rows and columns results in the same hidden state as changing the signs of the rows only.

\subsection{Switching columns signs arms results}\label{ablation-colonnes-sec}

We compared a version of our linear HadamRNN where both row and column switches are learned during training, to the standard linear HadamRNN in which only row switches are learned, as outlined in this paper. 

The two versions of the models and the datasets use the same configurations as detailed in \cref{sec:expe} and \cref{app:expe_details}. We used a binary orthogonal recurrent weight matrix for $W$ (see \cref{hrnn-sec}) and quantized $U$ and $V$ over $4$ bits. Activations were not quantized.

Results are reported in \cref{tab:rows_vs_rows_and_columns}. 
The standard version consistently outperforms the variant, except in the copy task. This finding, along with \cref{colonne-appendix}, supports our choice to learn row switches only.

\begin{table}[t]
    \caption{Comparison of the performances of the linear HadamRNN in which only the rows switches are learned, and the linear HadamRNN in which both rows and columns switches are learned. $W$ is binary, and $U$ and $V$ are quantized over $4$ bits.}
    \label{tab:rows_vs_rows_and_columns}
    \small
    \centering
    \begin{tabular}{ccc}
        \toprule
         Model & row switches & row and column switches \\
         \midrule
         \midrule
         copy task & 1.6e-7 & 1.3e-8 \\
         \midrule
         Permuted MNIST & 94.88 & 28.99 \\
         \midrule
         Sequential MNIST & 96.63 & 96.31 \\
         \midrule
         IMDB & 87.43 & 85.24 \\
         \bottomrule
    \end{tabular}
\end{table}

\section{Post training quantization of activations}\label{sec:ptq}

\subsection{Reminders on fixed-point arithmetic}

We follow conventional notations for fixed-point arithmetic: For integers $q \geq 0$ and $p \geq 1$, the set of $p$-bit fixed-point numbers with $q$ bits allocated for the fractional part is represented as:
\[
\QQ_{p,q} = \frac{1}{2^q}\lb-2^{p-1}, 2^{p-1}-1\rb \subset \left[-2^{p-q-1}, 2^{p-q-1}\right) \subset \RR.
\]

In the specific case where $p = q+1$, we use the simplified notation:
\[
\QQk{q} = \QQ_{q+1,q} = \frac{1}{2^q}\lb-2^q, 2^q-1\rb \subset \left[-1, 1\right).
\]

The addition of two fixed-point numbers is only possible when they have identical fractional sizes. For example, if $x$ and $x' \in \QQk{q}$, then $x + x' \in \QQ_{q+2,q}$.

Multiplying two fixed-point numbers $x \in \QQ_{p,q}$ and $x' \in \QQ_{p',q'}$ results in a value within $x \cdot x' \in \QQ_{p+p'-1,q+q'}$.
Thus, the product of $x \in \QQk{q}$ and $x' \in \QQk{q'}$ satisfies $x \cdot x' \in \QQ_{q+q'+1,q+q'} = \QQk{q+q'}$.


\subsection{Weight quantization:}

In the description of \HRNN{} and \BHRNN{}, matrices $U\in\RR^{d_h\times d_{in}}$ and $V\in\RR^{d_{out}\times d_h}$ are quantized on $p$ bits as described in~\cref{quant_U_V-sec}. We will note $q_p(U)=\alpha_U\widetilde U$ (resp. $q_p(V)=\alpha_V\widetilde V$) where the quantized matrix $\widetilde U\in\QQk{p-1}^{d_h\times d_{in}} = \QQk{p,p-1}^{d_h\times d_{in}}$  (resp. $\widetilde V\in\QQk{p-1}^{d_{out}\times d_h}$), and  $\alpha_U= \|U\|_{\max}>0$ (resp. $\alpha_V= \|V\|_{\max}>0$), where $\|U\|_{\max} = \sup_{i,j} |U_{i,j}|$.

Given the definition of $W$ in \eqref{binary_W-eq} and \eqref{ternary_W-eq}, we note the recurrent matrix $W=\alpha_W\widetilde W$, with $\widetilde W\in\QQk{1}^{d_h\times d_h}$ and $\alpha_W=\frac{2}{\sqrt{d_h}}$ for \HRNN{} (or $\alpha_W=\frac{2}{\sqrt{2^k}}$ for \BHRNN{}).
Notice indeed that, since $\QQk{1}=\{-1,\frac{1}{2},0,\frac{1}{2}\}$, all the matrices $W$ in 
$\frac{1}{\sqrt{d_h}} \{-1,+1\}^{d_h\times d_h}$ (resp. $\frac{1}{\sqrt{2^k}} \{-1,0,+1\}^{d_h\times d_h}$) are also in $\frac{2}{\sqrt{d_h}} \QQk{1}^{d_h\times d_h}$ (resp. $\frac{2}{\sqrt{2^k}} \QQk{1}^{d_h\times d_h}$).
{
Taking $\QQk{0}=\{-1,0\}$ instead of $\QQk{1}$ does not permit the components of $\frac{2}{\sqrt{d_h}}\widetilde W$ (resp $\frac{2}{\sqrt{2^k}}\widetilde W$) to reach $+\frac{1}{\sqrt{d_h}}$ (resp $+\frac{1}{\sqrt{2^k}}$).}


\subsection{Input and Hidden State encoding:}
For each time step $t \in \lb 1, T \rb$, the quantized hidden state $h_t$ is encoded using $p_a$ bits, where $p_a \geq 1$. A fixed scaling factor $\alpha_h > 0$ is applied such that $h_t = \alpha_h \tilde{h}_t$, with $\tilde{h}_t \in \QQk{p_a-1}^{d_h}$. 
We use the notation $q^{\alpha_h}_{p_a}(x)$ to denote the closest element of $x \in \RR$ in $\alpha_h \QQk{p_a-1}$. This notation extends to vectors.

In practice, $\alpha_h$ must be sufficiently large to cover the range of values for the full-precision hidden states. However, increasing $\alpha_h$ slightly may not significantly impact performance, allowing some flexibility in its selection.

Similarly, each input $x_t$, for $t \in \lb1, T\rb$, is quantized using $p_i$ bits, with $p_i \geq 1$. For simplicity, we continue to denote the quantized inputs as $x_t$. Using a fixed scaling factor $\alpha_i > 0$, we represent $x_t = \alpha_i \tilde x_t$, where $\tilde x_t \in \QQk{p_i-1}^{d_{in}}$.

\paragraph{Input Quantization Examples:}
\begin{itemize}
    \item  For the copy-task  where input entries are either 0 or 1, we use $\alpha_i = 2$. As long as $p_i \geq 2$, quantization does not alter the inputs.
    \item For the pixel-by-pixel MNIST tasks, with normalized 8-bit unsigned integer values in $[0,1]$, we set $\alpha_i = 1$. Quantization has no effect as long as $p_i \geq 9$.
    \item For the IMDB dataset, the $512$ inputs are given by a floating point word embedding preprocessing. We quantize these preprocessing outputs using $p_i=p_a$ bits (i.e. within $\QQk{p_a-1}$), and set $\alpha_i$ to the maximum value of the embeddings. 
\end{itemize}

\subsection{Independence to scaling factors} 
It can be shown by induction that a vanilla (Linear or ReLU) RNN with parameters $(W, b_i, U, V, b_o)$ produces the same outputs  as an RNN with parameters $(W, \lambda b_i, \lambda U, \frac{1}{\lambda} V, b_o)$ for all $\lambda > 0$.


In the following sections, we use this idea and, instead of applying the network with quantized weights $(q_{1}(W), b_i, q_{p}(U), q_p(V), b_o) = (\alpha_W \widetilde W, b_i, \alpha_U \widetilde U, \alpha_V \widetilde V, b_o)$, we rescale with $\lambda = \frac{1}{\alpha_i\alpha_U}$ and apply the network with the parameters $(\alpha_W \widetilde W, \frac{b_i}{\alpha_i \alpha_U}, \frac{1}{\alpha_i} \widetilde U, \alpha_i \alpha_U \alpha_V \widetilde V, b_o)$.

\subsection{Recurrence with Fixed-Point Arithmetic :}
Given, the quantized hidden state $h_t = \alpha_h \tilde h_t \in \alpha_h \QQk{p_a-1}^{d_h}$, with a fixed value of $\alpha_h$ chosen later, and the quantized input $x_t = \alpha_i \tilde x_t \in \alpha_i \QQk{p_i-1}^{d_{in}}$. 

The recurrence relation defined in \eqref{eq:ORNN_hidden} for the parameters $(\alpha_W \widetilde W, \frac{b_i}{\alpha_i \alpha_U}, \frac{1}{\alpha_i} \widetilde U, \alpha_i \alpha_U \alpha_V \widetilde V, b_o)$ becomes:
\[
\alpha_h \tilde h_{t} = h_{t} = q^{\alpha_h}_{p_a} \left( \alpha_W \widetilde W h_{t-1} + \frac{1}{\alpha_i} \widetilde U x_{t} + \frac{b_i}{\alpha_i \alpha_U}\right)
\]
\[
= q^{\alpha_h}_{p_a} \left( \alpha_W \alpha_{h} \widetilde W \tilde h_{t-1} + \widetilde U \tilde x_{t} + \frac{b_i}{\alpha_i \alpha_U}\right).
\]

In this equation, the matrix-vector multiplications $\widetilde W \tilde h_{t-1}$ and $\widetilde U \tilde x_t$ are computed using fixed-point arithmetic. 

We take advantage of the flexibility in choosing $\alpha_h$ to ensure that multiplying by $\alpha_W \alpha_h$ can be performed with a simple bit shift. We first compute $\max_h = \max_{t\in\lb 1,T \rb} \|h_t\|_\infty$, based on the full-precision hidden states from the training and validation datasets. We choose $n_h = argmin_{n\in\NN}\{2^n\geq\max_h\alpha_W\}$, and set $\alpha_h = 2^{n_h}/\alpha_W$. 

The values used in the experiments are presented in \cref{tab:alpha_activ}.

We also quantize the scaled bias using $p_a$ bits.

Finally, the entry of the quantization $q^{\alpha_h}_{p_a}$  is a fixed-point value and thus belong to a finite set, the size of which depends on  $\alpha_W \alpha_h$ and $(p, p_a, p_i)$. Thus, $\tilde h_{t}$ can be computed using a simple look-up table, eliminating the need for floating-point operations (division by $\alpha_h$ in $q^{\alpha_h}_{p_a}$).

The quantization of the output layer follows the classical PTQ methodology, using a quantized value of the bias $b_o$ and the multiplicative factor $\alpha_i \alpha_U \alpha_V$. 

This leads to a fully quantized RNN. The \cref{tab:full_results} shows that such fully quantized RNNs with  $p_a=12$ bits achieve equivalent results than the RNNs with quantized weights only.

\begin{table}
  \caption{Value of $\alpha_W$ and $\alpha_h$  for activation quantification across the datasets and bitwidth.}
  \label{tab:alpha_activ}
  \centering
  
  \begin{tabular}{l c c c c c c c c c c c c c c}  
    \toprule
    \multirow{2}{*}{Model}  & U,V  &  & \multicolumn{2}{c}{\textbf{Copy-task}} &  \multicolumn{2}{c}{\textbf{sMNIST}} & \multicolumn{2}{c}{\textbf{pMNIST}} &  \multicolumn{2}{c}{\textbf{IMDB}} \\
     & bitwidth &  & $\alpha_W$ & $\alpha_W\alpha_h$ &$\alpha_W$ & $\alpha_W\alpha_h$ & $\alpha_W$ & $\alpha_W\alpha_h$ & $\alpha_W$ & $\alpha_W\alpha_h$  \\
    \midrule
    \midrule
\HRNN     & 4 & & 0.088  & 2.0 & 0.044 & 4.0 & 0.044 & 1.0 & 0.044  & 8.0 \\
    \bottomrule
  \end{tabular}
\end{table}

\end{document}